\documentclass{article}

    \setcounter{topnumber}{2}
    \setcounter{bottomnumber}{2}
    \setcounter{totalnumber}{4}     
    \setcounter{dbltopnumber}{2}    

\usepackage{graphicx}
%
%
\usepackage{url}
\usepackage{amsmath,amsfonts,amssymb,amsthm}
\usepackage{algorithm}
\usepackage{algorithmic}

\newcommand{\N}{\mathbb{N}}
\newcommand{\R}{\mathbb{R}}
\newcommand{\Order}{\mathcal{O}}

\newcommand{\sign}{\operatorname{sign}}
\newcommand{\fp}[2]{\frac{\partial #1}{\partial #2}}
\newcommand{\TT}[1]{\mbox{\tiny $#1$}}
\newcommand{\Expectation}{\mathbb{E}}
\newcommand{\Normal}{\mathcal{N}}
\newcommand{\id}{\mathbf{I}}
\newcommand{\Projective}{\mathbb{P}}

\newcommand{\bmat}{\begin{pmatrix}}
\newcommand{\emat}{\end{pmatrix}}

\theoremstyle{plain}
\newtheorem{mathcounter}{Dummy}

\newtheorem{theorem}[mathcounter]{Theorem}
\newtheorem{lemma}[mathcounter]{Lemma}

\begin{document}

\title{Coordinate Descent with Online Adaptation\\of Coordinate Frequencies}

\author{Tobias Glasmachers\\
		Institut f\"ur Neuroinformatik\\
		Ruhr-Universit\"at Bochum, Germany\\[1em]
		{\"U}r\"un Dogan\\
		Institut f\"ur Mathematik\\
		Universit\"at Potsdam, Germany
}

\maketitle

\begin{abstract}
Coordinate descent (CD) algorithms have become the method of choice for
solving a number of optimization problems in machine learning. They are
particularly popular for training linear models, including linear
support vector machine classification, LASSO regression, and logistic
regression.
We consider general CD with non-uniform selection of coordinates.
Instead of fixing selection frequencies beforehand we propose an online
adaptation mechanism for this important parameter, called the adaptive
coordinate frequencies (ACF) method. This mechanism removes the need to
estimate optimal coordinate frequencies beforehand, and it automatically
reacts to changing requirements during an optimization run.
We demonstrate the usefulness of our ACF-CD approach for a variety
of optimization problems arising in machine learning contexts. Our
algorithm offers significant speed-ups over state-of-the-art training
methods.
\end{abstract}

\section{Introduction}
\label{sec:introduction}

Coordinate Descent (CD) algorithms are becoming increasingly important
for solving machine learning tasks. They have superseded other
gradient-based approaches such as stochastic gradient descent (SGD) for
solving certain types of problems, such as training of linear support
vector machines (SVMs) as well as LASSO regression and other $L_1$
regularized learning problems \cite{friedman:2007,hsieh:2008,liblinear}.
There is growing interest in machine learning applications of CD in the
field of optimizaation,
see e.g.~\cite{nesterov:2012,richtarik:2012}.

Natural competitors for solving large scale convex problems are
(trust region/pseudo) Newton methods and (stochastic) gradient descent.
In contrast to these approaches,
CD needs only a single component of the full gradient per iteration and
is thus particularly efficient if such a partial derivative is much
faster to compute than the gradient vector. This is often the case in
machine learning problems. The difference in computational effort per
step can be huge, differing by a factor as big as the number of data
points.
Stochastic gradient descent (SGD) has the very same advantage over
(plain) gradient descent. An ubiquitous problem of SGD is the need to
set a learning rate parameter, possibly equipped with a cooling schedule.
This is a cumbersome task, and success of a learning method---at least
within reasonable computational limits---can well depend on this choice.

CD algorithms do also have parameters. The most generic such parameter
is the frequency of choosing a coordinate for descent, e.g., in
randomized CD algorithms. This parameter is not obvious from a machine
learning perspective because uniform coordinate selection is apparently
dominant in all kinds of applications of CD. This is different in the
optimization literature on CD where non-uniform distributions have been
considered. This literature also offers a few criteria for choosing
selection probabilities (see e.g.\ \cite{nesterov:2012,richtarik:2013}
and refer to the more detailed discussion in section~\ref{section:CD}).
Interestingly these recommendations are all static in nature, i.e., the
selection probabilities are set before the start of the optimization run
and are then kept constant. This proceeding may be suitable for simple
(i.e., quadratic) objectives, however, it is difficult to propose good
settings of the parameters for realistic optimization scenarios.

In contrast, for SGD there have been a number of proposals for adapting
such parameters online during the optimization run for optimal progress
(see \cite{schaul:2013} and references therein). This does not only
effectively remove the need to adjust parameters to the problem instance
before the run, which is anyway often difficult due to missing
information. It also allows to react to changing requirements during the
optimization run. Similarly, trust region methods and many other
optimization strategies take online information into account for
adapting their parameters to the local characteristics of the problem
instance they are facing.

The present paper proposes an online adaptation technique for the
coordinate selection probability distribution of CD algorithms. We refer
to this technique as \textit{Adaptive Coordinate Frequencies} (ACF), and
to the resulting coordinate descent scheme as ACF-CD. We have first
proposed this algorithm in~\cite{glasmachers:2013arxiv,glasmachers:2013acml};
the present paper broadens and extends this works.

Our approach is inspired by previous work. First of all, the formulation
of our method is most natural in the context of random coordinate
descent as discussed by \cite{nesterov:2012}. It is closest in spirit to
the Adaptive Coordinate Descent algorithm by \cite{loshchilov:2011} that
adapts a coordinate system of descent directions online with the goal to
make steps independent. This algorithm maintains a number of state
variables (directions) that are subject to online adaptation. However,
this algorithm is deemed to be inefficient unless arbitrary directional
derivatives can be computed cheaply, which most often is not the case.
Online adaptation in general turns out to be a technique applied in many
different optimization strategies (refer to section~\ref{section:online}
for a detailed discussion).

The remainder of this paper is organized as follows. First we review the
basic coordinate descent algorithm with a focus on coordinate selection
techniques and summarize its use for the solution of various machine
learning problems. We then review online parameter adaptation techniques
applied in different (machine learning relevant) optimization methods.
Then we present our online parameter adaptation algorithm for coordinate
selection probabilities, followed by a Markov chain analysis of its
convergence behavior. The new algorithm is thoroughly evaluated on a
diverse set of problems against state-of-the-art CD solvers.

\section{Coordinate Descent}
\label{section:CD}

We consider convex optimization with variable $w = (w_1, \dots, w_n)$.
In the simplest case each $w_i \in \R$ is a real value, however, in
general we want to allow for a decomposition of the search space $\R^N$
into $n$ sub-spaces $w_i \in \R^{N_i}$ with $N = \sum_{i=1}^n N_i$. For
simplicity we refer to each component $w_i$ as a coordinate in the
following. The generalization to subspaces of more than one dimension is
implied. We denote the set of coordinate indices by
$I = \{1, \dots, n\}$.

Let $f : \R^N \to \R$ denote the objective function to be minimized.
Constraints may of course be present; they are handled implicitly in
this general presentation since the exact constraint handling technique
is problem specific.
The basic CD scheme for solving this problem iteratively is presented in
algorithm~\ref{algo:CD}.

\begin{algorithm}[h]
\begin{algorithmic}
	\STATE \textbf{input:} $w^{(0)} \in \R^n$
	\STATE $t \leftarrow 1$
	\REPEAT
		\STATE select active coordinate $i^{(t)} \in I$
		\STATE solve the optimization problem with additional constraints \\
		~~~~~~~~~~~~~~~~~~~~~~~~~~~~~~~~~~~~$w_j^{(t)} = w_j^{(t-1)}$ for all $j \in I \setminus \{ i^{(t)} \}$
		\STATE $t \leftarrow t + 1$
	\UNTIL{stopping criterion is met}
	\caption{Coordinate Descent (CD) algorithm.}
	\label{algo:CD}
\end{algorithmic}
\end{algorithm}

CD methods have advantages over other gradient-based optimization
schemes if the partial derivatives%
\footnote{We refer to $\fp{f}{w_i}(w)$ as a partial derivative of $f$.
  It is understood that in the case of subspace descent it consists of a
  vector of $N_i$ partial derivatives.}
$\fp{f}{w_i}(w)$ are significantly faster to compute than the complete
gradient~$\nabla_w f(w)$. In many machine learning problems this
difference is of order $\Theta(n)$, i.e., the computation of the full
gradient is about $n$~times more expensive than the computation of a
partial derivative. Based on a partial derivative a coordinate descent
solver performs a step on only the $i$-th coordinate by solving the
(often one-dimensional) sub-problem either optimally or approximately,
e.g., with a gradient descent step, a Newton step, line search, or with
a problem specific (possibly iterative) strategy.

Convergence properties of CD iterates and their values have been
established, e.g, in \cite{luo:1992,tseng:2001}, and runtime analysis
results in~\cite{richtarik:2013}.
A recent analysis in a machine learning context can be found
in~\cite{shalev:2013}.

\subsection{Coordinate Selection}

CD can come in a number of variations, e.g., differing in how the
sub-problem in each iteration~$t$ is solved. Here we want to highlight
the selection of coordinates.

A first difference is between deterministic and randomized
choices of coordinates $i^{(t)}$. The most prominent deterministic
scheme is the simple cyclic rule $i^{(t)} = t \bmod n$. It is often
implemented as an outer (epoch) loop and an inner loop sweeping over
all coordinates in the set~$I$. This means that all coordinates are
visited equally often and always in their natural order.
This basic scheme is sometimes randomized by permuting the indices in
the inner loop randomly in each epoch. The predefined but often
arbitrary order of the coordinates is thus avoided, but this method
still sticks to paying equal attention to each coordinate. Both
approaches can be viewed as uniform coordinate selection with different
dependency structures between variables.

Moving away from the epoch-based approach it is most natural to pick
$i^{(t)}$ i.i.d.\ at random from some distribution $\pi$ on $I$. We
denote the probability of selecting coordinate $i$ with $\pi_i$. The
simplest choice for $\pi$ is the uniform distribution. This choice seems
to be distinguished since it is the most obvious and in fact the only
unbiased one. Also, it allows to sample an index in constant time (based
on a random number generator that samples from the uniform distribution
on the unit interval).
Nesterov~\cite{nesterov:2012} proposes an algorithm for drawing a sample
from a non-uniform distribution $\pi$ in $\log(n)$ time, which is often
tolerable.

Better performance (in terms of progress per iteration) can be expected
if the best coordinate is chosen in each iterations, e.g., in a greedy
manner. This requires knowledge of the full gradient, a prerequisite
that usually renders CD methods inefficient. However, there are notable
exceptions. The standard solver for training non-linear (kernelized)
SVMs is based on the SMO algorithm \cite{platt:1998} but with highly
developed working set selection heuristics \cite{fan:2005,glasmachers:2006}.
When dropping the bias term from the SVM model these methods reduce to
CD with (approximately) greedy coordinate selection (refer to
\cite{steinwart:2011} for an extensive study). The reason for the
efficiency of CD in this case is that computation of the full gradient
takes $\Order(n^2)$ operations, but after a CD step the new gradient can
be obtained from the old one in only $\Order(n)$ operations. A similar
technique has been applied in \cite{hsieh:2011} for a sparse matrix
factorization problem. However, even linear time complexity is
prohibitive in many applications of CD algorithms. Then greedy selection
is not feasible and coordinate selection needs to revert to sampling
from a coordinate distribution~$\pi$.

\subsection{Non-uniform Distributions}

Despite the seemingly distinguished properties of the uniform
distribution it is in general implausible that selecting all coordinates
equally often should be optimal. In a machine learning problem a
coordinate often corresponds either to a training example or to a
feature, and it is understood that some data points (and some features)
are more important than others. Such important coordinates should be
chosen much more frequently than others.

However, knowing that a non-uniform distribution is advantageous does
not tell us in which direction to deviate from uniformity. The question
for the relative importance of coordinates for optimization is often
about as hard to answer as solving the optimization problem itself.

In the literature on CD the problem of finding good or even optimal
probabilities $\pi_i$ has been addressed mostly in terms of upper
runtime bounds, and under the additional assumptions that all partial
derivatives are Lipschitz continuous and that upper bounds on the
corresponding Lipschitz constants are known.
Nesterov derives a runtime bound (equation (2.12) in \cite{nesterov:2012})
that can be minimized given upper bounds on the Lipschitz constants, but
the conclusions drawn from his analysis instead consider the achievable
\emph{worst case} convergence rate as compared to other approaches.
A more direct approach is proposed by Richt{\'a}rik and Tak{\'a}{\v{c}}
in \cite{richtarik:2013} where minimization of a runtime bound for a
fixed problem instance is proposed as an optimization strategy (see
section~4 in~\cite{richtarik:2013}).

These approaches offer invaluable theoretical insights but turn out to
be problematic in practice. If coordinates correspond to data points or
features then the due to symmetrical treatment of all coordinates in the
machine learning optimization problem all a-priori upper bounds on the
Lipschitz constants of coordinate-wise derivatives coincide, resulting
in uniform coordinate selection. Data-dependent bounds can be tighter
and promise to be non-uniform. However, their computation may be too
costly to be practical. Even worse, for the procedure to be effective
these bounds would need to be continuously updated since the relative
importance of variables can change drastically during an optimization
run.

It may be for the reasons outlined above or just for simplicity of
concepts and implementations that standard algorithms in statistics and
machine learning rely nearly exclusively on uniform coordinate selection.
Actually, the only exception we are aware of is the shrinking technique
for linear SVM optimization (see section~\ref{subsection:linear-svm}).

\section{Coordinate Descent in Machine Learning}
\label{section:cd-in-ml}

CD methods have been popularized in statistics and machine learning
especially for certain regularized empirical risk minimization problems.
CD methods are particularly well suited for problems with sparse
solutions. One advantage is that they can quickly set single coordinates
to exact zero. This is in contrast to (stochastic) gradient descent,
which is often the most natural competitor. Hence intermediate solutions
are often sparse, which can greatly speed up computations. A follow up
advantage is that sparsity can be taken into account by coordinate
selection algorithms. This insight is at the heart of shrinking
techniques for SVM training~\cite{joachims:1998,fan:2005,liblinear}.

In machine learning, sparsity is often a result of regularization, most
prominently with an $L_1$ penalty on the weight vector of a linear model,
which is the case in least absolute shrinkage and selection operator
(LASSO) models \cite{friedman:2007}. Logistic regression with $L_1$
regularization is another prominent example \cite{yu:2011,yuan:2012a}.
Alternatively, sparsity (of the dual solution) can result from the
empirical risk term, e.g., in a support vector machine with hinge loss.
CD training of linear SVMs has been demonstrated to outperform competing
methods~\cite{hsieh:2008}.

In the following we present four prototypical supervised machine
learning problems that are commonly solved with CD algorithms. They will
serve as testbeds throughout this paper. Of course there exist many more
application areas such as sparse matrix factorization \cite{hsieh:2011},
stochastic variational inference \cite{hoffman:2013}, and others.
We start with data $\{(x_1, y_1), \dots, (x_\ell, y_\ell)\}$ composed of
inputs $x_i \in \R^d$ and labels $y_i \in Y$. Let $L(h(x), y)$ denote a
loss function comparing model outputs $h(x)$ and ground truth labels~$y$.
The (primal, unconstrained) regularized empirical risk minimization
training problem of the linear predictor $h_w(x) = \langle w, x \rangle$
amounts to
\begin{align}
	\min_{w \in \R^d} \quad f(w) = \frac{\lambda}{p} \big\|w\big\|_p^p + \frac{1}{\ell} \sum_{i=1}^\ell L\big(\langle w, x_i \rangle, y_i\big)
	\label{eq:trainingproblem}
\end{align}
where $p$ is typically either $1$ or $2$ and $\lambda > 0$ is a
complexity control parameter.

\subsection{The LASSO}

The LASSO problem is an instance of this problem with $p = 1$. In its
simplest form it is applied to a regression problem with $Y = \R$ and
$L(h(x), y) = \frac{1}{2} (h(x) - y)^2$.
Friedman et al.~\cite{friedman:2007} propose to solve this problem with
CD with a simple cyclic coordinate selection rule.

With all coordinates except $i$ fixed the resulting one-dimensional
problem is piecewise quadratic. The empirical risk term is a quadratic
term, and the regularizer restricted to coordinate $w_i$ reduces to
$\lambda |w_i|$. Given the partial derivative $\fp{f(w)}{w_i}$ this
problem can be solved in constant time: a gradient step equals a Newton
step since the second derivative equals one, and a case distinction
needs to be made for whether the component $w_i$ after the Newton step
is optimal, changed sign, or ends up at exact zero.

The most costly step is the computation of the derivative. It takes
$O(n_\text{nz})$ operations with $n_\text{nz}$ denoting the number of
non-zeros in the $i$-th \emph{column} of the data matrix $X$ composed
of the inputs $x_1, \dots, x_\ell$, i.e., the number of inputs with
non-zero $i$-th component~$(x_j)_i$. We will find in the following that
this situation is rather typical.

\subsection{Linear SVMs}
\label{subsection:linear-svm}

With binary classification labels $Y = \{-1, +1\}$, hinge loss
$L(h(x), y) = \max\{0,$ $1 - y h(x)\}$ and $p = 2$ we obtain the linear
soft margin SVM from equation~\eqref{eq:trainingproblem}. Hsieh et al.\
\cite{hsieh:2008} solve the corresponding dual problem
\begin{align}
	\min_{\alpha \in \R^\ell} \quad & f(\alpha) = \frac{1}{2} \sum_{i,j=1}^\ell \alpha_i \alpha_j y_i y_j \langle x_i, x_j \rangle - \sum_{i=1}^\ell \alpha_i \label{eq:SVM-dual} \\
	\text{s.t.} \quad & 0 \leq \alpha_i \leq C = \frac{1}{\lambda} \notag
\end{align}
with CD. The key technique for making CD iterations fast is to keep
track of the model vector $w = \sum_{i=1}^\ell \alpha_i y_i x_i \in \R^d$
during optimization. A CD step in this box-constrained quadratic program
amounts to a one-dimensional, interval-constrained Newton step. The
first derivative of the objective function w.r.t.\ $\alpha_i$ is
$y_i \langle w, x_i \rangle - 1$, the second derivative is
$\langle x_i, x_i \rangle$, which can be precomputed. The resulting
CD step reads
\begin{align*}
	\alpha^{(t)}_i = \left[ \alpha^{(t-1)}_i - \frac{1 - y_i \langle w, x_i \rangle}{\langle x_i, x_i \rangle} \right]_0^C
	\enspace,
\end{align*}
where $[x]_a^b = \max \big\{ a, \min\{b, x\} \big\}$ denotes truncation
of the argument $x$ to the interval $[a, b]$. With densely represented
$w \in \R^d$ and sparse data $x_i$ the complexity of a step is not
only independent of the data set size $\ell$, but even as low as the
number of non-zero entries in $x_i$ (and therefore often much lower
than the data dimension~$d$). Again we arrive at a complexity of
$\Order(n_\text{nz})$, where in this case $n_\text{nz}$ denotes the
number of non-zeros in the $i$-th \emph{row} of the data matrix~$X$.

The liblinear algorithm \cite{hsieh:2008,liblinear} applies a shrinking
heuristic that removes bounded variables from the problem during
optimization. This technique was originally proposed for non-linear SVM
optimization~\cite{joachims:1998}, and it can give considerable
speed-ups. From a CD perspective this technique sets the
probabilities~$\pi_i$ of removed coordinates to zero while normalizing
the remaining probabilities to a uniform distribution on the active
variables. As such it is the only technique in common use that actively
adapts coordinate selection probabilities~$\pi_i$ online during the
optimization run. It should be noted that the decision which coordinates
to remove is based on a heuristic. No matter how robust this heuristic
is designed, it is subject to infrequent failure, resulting in costly
convergence to a sub-optimal point, followed by a warm-start.

Online adaptation of $\pi$ is well justified for SVM training. This is
because the relative importance of coordinates changes significantly
over the course of an optimization run. Consider a variable $\alpha_i$
corresponding to an outlier $(x_i, y_i)$. At first, starting at
$\alpha_i = 0$, this coordinate is extremely important since it needs to
move by the maximal possible amount of~$C$. Once it arrives at the upper
bound the constraint $\alpha_i \leq C$ becomes active, essentially
fixing the variable at its current value. Thus its importance for
further optimization drops to zero. Of course the constraint may become
inactive later on. Hence, any a-priori estimation of relative importance
of coordinates (e.g., based on upper bounds on Lipschitz constants) is
not helpful in this case and online adaptation of $\pi$ is of uttermost
importance.

\subsection{Multi-class SVMs}

A non-trivial extension to this problem is multi-class SVM
classification. Different extensions to the binary problem exist. The
arguably simplest and most generic one is the one-versus-all approach
that reduces the multi-class problem to a set of two-class problems.
Other approaches attempt to generalize the margin concept to multiple
classes. There is the classic approach by Weston and Watkins
\cite{weston:1999} that turns out to be equivalent to a similar
proposals in \cite{vapnik:1998} and \cite{bredensteiner:1999}. A popular
alternative was proposed by Crammer and Singer~\cite{crammer:2002}, and
a considerably different approach by Lee et al.~\cite{lee:2004}.
All of these approaches differ only in the type of (piecewise linear)
large-margin loss function as a  multi-class replacement for the hinge
loss.

Although these methods were originally designed for non-linear SVMs they
can be applied for linear SVM training. From an optimization perspective
a decisive difference from binary classification is that the
corresponding dual problems contain $\Order(K)$ variables per training
example, where $K$ denotes the number of classes. This calls for a
solution with proper subspace descent with $N_i \in \Order(K) > 1$.
Sub-problems can be solved either with a general purpose QP solver or
with a SMO-style technique~\cite{platt:1998}.

We consider the ``default'' multi-class SVM proposed by Weston and
Watkins. Its dual problem is a box-constrained quadratic program that
could well be solved with a standard CD approach. Treating it as a
subspace descent problem is computationally attractive: once the partial
derivative is computed the (non-trivial) sub-problem can be solved to
arbitrary precision -- without a need for further derivative
computations in each sub-step.

\subsection{Logistic Regression}

Logistic regression is closely related to the linear SVM problem, but
with smooth loss function $L(h(x), y) = \log(1 + \exp(-y h(x)))$
replacing the non-smooth hinge loss. Its dual problem
\begin{align}
	\min_{\alpha \in \R^\ell} \quad & f(\alpha) = \frac{1}{2} \sum_{i,j=1}^\ell \alpha_i \alpha_j y_i y_j \langle x_i, x_j \rangle + \sum_{i=1}^\ell \alpha_i \log(\alpha_i) + (C - \alpha_i) \log(C - \alpha_i) \notag \\
	\text{s.t.} \quad & 0 \leq \alpha_i \leq C = \frac{1}{\lambda} \label{eq:logreg-dual}
\end{align}
can be solved efficiently with CD methods. The dual variables are
connected through a quadratic term, while the more difficult to handle
logarithmic terms as well as the constraints are separable. This problem
shares many properties with the dual linear SVM problem, however, the
logarithmic terms do not allow for an exact solution of the
one-dimensional CD sub-problem. Instead an iterative solver with second
order steps is employed in \cite{yu:2011} and implemented in
liblinear~\cite{liblinear}.
Also, the solution is dense which means that shrinking techniques are
not applicable. Thus this problem is solved with uniform coordinate
probabilities (coordinate sweeps in random order).

\section{Online Parameter Adaptation}
\label{section:online}

The coordinate sampling distribution $\pi$ is represented as an
$n$-dimensional vector $\pi \in \R^n$, subject to simplex constraints
$\pi_i \geq 0$ and $\sum_{i=1}^n \pi_i = 1$. This vector is a parameter
of the CD algorithm, and as such it can be subject to online tuning.
Before we propose such a procedure in the following section we review
existing parameter adaptation techniques.

Online adaptation of the values of algorithm parameters is found in many
different types of algorithms. Here we restrict ourselves to
optimization techniques. Driving this to the extreme one may consider
any non-trivial aspect of the state of an optimization algorithm that
goes beyond the current search point (and its properties, such as
derivatives) as parameters that may or may not be subject to online
adaptation.

Sometimes parameters can be set to robust default values, making online
adaptation essentially superfluous. However, it turns out that in many
cases performance can be increased when tuning such parameters to the
problem \emph{instance} at hand, which is often most efficiently done in
an online fashion. In the most extreme case an algorithm can break down
completely if online adaptation of a parameter is switched off. The
types of parameters present in different optimization algorithms differ
vastly. We revisit a few prototypical examples in the following.

\subsection{Stochastic Gradient Descent}

Stochastic Gradient Descent (SGD) algorithms are widespread in machine
learning. They are appreciated for their ability to deliver a usable
(although often poor) model even before the first sweep over the data
set is finished. In a machine learning context stochasticity is
introduced artificially into the gradient descent procedure by
approximating the empirical risk term
\begin{align*}
	\frac{1}{\ell} \sum_{i=1}^\ell L(h(x_i), y_i)
\end{align*}
with the loss $L(h(x_i), y_i)$ for a single pattern. This estimate is
unbiased, and this property carries over to its gradient
$\nabla_w L(h_w(x_i), y_i)$. For the sake of simplicity we assume
minimization of the empirical risk in the following. Then an SGD
iteration performs the update step
\begin{align*}
	w^{(t)} = w^{(t-1)} - \eta^{(t)} \cdot \nabla_w L(h_w(x_i), y_i)
\end{align*}
where $\eta^{(t)} > 0$ is a learning rate. Cooling schedules such as
$\eta^{(t)} \sim 1/t$ result in strong convergence guarantees
\cite{bottou:1998}, however, in some cases they turn out to be
inefficient in practice. This problem has been solved by online
adaptation, e.g., by Schaul et al.~\cite{schaul:2013}. In their approach
the objective function is modeled as a quadratic function. The model is
estimated online from the available information. The learning rate is
then adjusted in a way that is optimal given the current model. This
scheme was demonstrated to outperform plain SGD as well as number of
alternative methods on the task of training deep neural networks
(see \cite{schaul:2013} and references therein).

\subsection{Resilient Propagation}

The Resilient Propagation (Rprop) algorithm was originally designed for
backpropagation training of neural networks~\cite{riedmiller:1993}.
However, it constitutes a general optimization technique. The method
maintains coordinate-wise step sizes $\gamma_i > 0$. In each iteration
the algorithm roughly follows the gradient of the objective function by
evaluating only the signs of the partial derivatives:
\begin{align*}
	w_i^{(t)} = w_i^{(t-1)} - \sign \left( \fp{f(w^{(t-1)})}{w_i} \right) \cdot \gamma_i
\end{align*}
The coordinate-wise step sizes are adjusted by multiplication with a
constant $\eta^+ > 1$ or $\eta^- < 1$ if the signs of derivatives in
consecutive iterations agree or disagree, respectively. This simple
scheme turns out to be highly efficient for many problems. It has been
further refined and evaluated in~\cite{igel:2003}.

It is understood that this algorithm would show poor performance without
online adaptation of $\gamma_i$. First of all it would be restricted to
a fixed grid of values. Even worse, it would be unable to adjust its
initial step size settings to the characteristics of a problem instance.
Thus it would most probably be deemed to making either too large or too
small steps.

\subsection{Evolution Strategies}

Evolution Strategies (ES) are a class of randomized zeroth order
(direct) optimization methods. In the last 15 years these evolutionary
algorithms have evolved into highly efficient optimizers. In each
iteration (called generation in the respective literature) the algorithm
samples one or more search points (called offspring individuals) from a
Gaussian search distribution $\Normal(\mu, \Sigma)$. The mean $\mu$ is
centered either on the best sample so far or on a weighted mean of recent
best samples. In simple ES the covariance matrix is restricted to the
form $\Sigma = \sigma^2 \id$ (with $\id$ denoting the unit matrix in
$\R^n$).
The ``step size'' parameter $\sigma > 0$ turns out to be crucial. Any
fixed choice results in extremely poor search performance. This is an
example of a parameter that cannot be fixed beforehand. Instead is needs
to decay and remain roughly proportional to the distance to the optimum.
Various schemes exist for its online adaptation, e.g., the classic
$1/5$-rule~\cite{rechenberg:1978}. Modern ES treat the whole covariance
matrix $\Sigma \in \R^{n \times n}$ as a free parameter. It is adapted
essentially by low-pass filtered weighted maximum likelihood estimation
of the distribution that generates the most successful recent search
points \cite{hansen:2001,suttorp:2009,glasmachers:2010,IGO-TR}. Online
parameter adaptation is an essential integral building block of these
algorithms. On the other hand, and despite their practical success, some
of these mechanisms lack satisfactory theoretic backup.

It is understood that our discussion of online parameter adaptation
techniques remains incomplete. For example, we did not cover trust
region Newton methods such as the Levenberg-Marquardt algorithm. The
various examples in this section should in any case suffice to
demonstrate that online parameter adaptation is a powerful and sometimes
critical technique for optimization performance. This naturally raises
the question why it has to date not been applied to the coordinate
selection distribution~$\pi$ of the CD algorithm.

\section{Online Adaptation of Coordinate Frequencies}
\label{section:ACF}

In this section we develop an online adaptation method for the CD
coordinate selection distribution~$\pi$. As a first step towards a
practical algorithm we ask the following questions:
\begin{enumerate}
\item
	What is the goal of adaptation?
\item
	Which quantity should trigger adaptation, i.e., which compact
	statistics of the optimization history indicates that adaptation
	is beneficial, and into which direction to adapt?
\end{enumerate}

The answer to the first question seems clear: we'd like to minimize the
runtime of the CD algorithm, or nearly equivalently, to maximize the
pace of convergence to the optimum. Since the optimum is of course
unknown this condition is hard to verify. However, experimentation with
a controlled family of CD problems as done in \cite{glasmachers:2013acml}
(and which was redone in a cleaner fashion, see section~\ref{section:MC})
reveals that this property seems to coincide with an easy-to-measure
statistics: maximization of the rate of convergence on unconstrained
quadratic problems is highly correlated with the fact that on average
the relative progress
\begin{align}
	\frac{f(w^{(t)}) - f(w^{(t-1)})}{f(w^{(t)}) - f^*}
	\label{eq:relative-progress}
\end{align}
(where $f^*$ denotes the optimal objective value) becomes independent of
the coordinate doing the step.%
\footnote{This statement and several related ones in this section are on
an intuitive level; they will be made rigorous in the next section.}
This observation provides us with a powerful tool, namely with the
working assumption that
\begin{itemize}
\item (a) maximizing the convergence rate and
\item (b) making average relative progress equal for all coordinates%
\footnote{Note that we need to exclude trivial solutions such as putting
  all probability mass on one coordinate, which makes progress vanish in
  all coordinates. For technical details refer to the next section.}
\end{itemize}
are equivalent.
We decide for (b) as our answer to the first question in the following.

This results in a straightforward quantity to monitor, namely average
relative progress per coordinate, or more precisely, their differences.
A similar and closely related observation is that increasing $\pi_i$ for
some coordinate~$i$ decreases relative progress, on average. This makes
intuitive sense since progress in that coordinate is split over more
frequent and hence smaller steps. Assuming a roughly monotonically
decreasing relation we should increase $\pi_i$ as soon as relative
progress with coordinate~$i$ is above average and the other way round.
This answers the second question above, namely how to do the adaptation.

Next we turn these concepts into an actual algorithm. Monitoring
relative progress is impossible without knowledge of the optimum.
However, CD algorithm often make relatively little progress per step so
that the denominator in equation~\eqref{eq:relative-progress} can be
assumed to remain nearly constant over a considerable number of
iterations. Thus we may well replace relative progress with absolute
progress $\Delta f = f(w^{(t)}) - f(w^{(t-1)})$, which is just the
numerator of equation~\eqref{eq:relative-progress}.

Formally speaking we have added an assumption that goes beyond the
standard CD algorithm at this point: we assume that the progress
$\Delta f$ can be computed efficiently. It turns out that in many cases
including all examples given in section~\ref{section:cd-in-ml} the
computation of $\Delta f$ is a cheap (constant time) by-product of the
CD step.

It holds $\pi_i \leq 1/n$ for at least one coordinate~$i$ (and usually
for the majority of them), so that for large~$n$ only very few progress
samples can be acquired per coordinate, and we should avoid relying on
too old samples. Therefore we do not perform any averaging of
coordinate-wise progress. Instead we maintain an exponentially fading
record of overall average progress, denoted by~$\overline{r}$, and
compare each single progress sample against this baseline in order to
judge whether progress in coordinate~$i$ is better or worse than
average.

The difference
$\Delta f - \overline{r} = [f(w^{(t)}) - f(w^{(t-1)})] - \overline{r}$
triggers a change of $\pi_i$. The exact quantitative form of this update
is rather arbitrary, and many possible forms should work just fine as
long as the change is into the right direction and the order of
magnitude of the change is reasonable. For efficiency reasons we do not
represent $\pi_i$ in the algorithm directly, instead we adapt
unnormalized preferences $p_i$, track their sum
$p_\text{sum} = \sum_{i=1}^n p_i$, and define
$\pi_i = p_i / p_\text{sum}$. Then we manipulate $p_i$ according to the
update rule
\begin{align*}
	p_i \leftarrow \left[ \exp \left( c \cdot \left( \frac{\Delta f}{\overline{r}} - 1 \right) \right) \cdot p_i \right]_{p_{\min}}^{p_{\max}}
\end{align*}
where ${p_{\min}}$ and ${p_{\max}}$ are lower and upper bounds and
$[t]_a^b = \min\{\max\{t, a\}, b\}$ denotes clipping of $t$ to the
interval $[a, b]$.
Given a coordinate~$i$ and its single step progress $\Delta f$ this
update step is made formal in algorithm~\ref{algo:ACF}. We call it the
\textit{Adaptive Coordinate Frequencies} (ACF) method. Its parameters
are the lower and upper bounds ${p_{\min}}$ and ${p_{\max}}$ and the
learning rates $c$ for preference adaptation and the exponential fading
record $\overline{r}$ of average progress. Default values for these
parameters are given in table~\ref{table:defaults}. The algorithm state
consists of $\pi$ and $\overline{r}$. The former can be initialized to
the uniform distribution unless a more informed setting is available,
the latter can be initialized to the average progress over a brief
warm-up phase (without adaptation), i.e., a single sweep over the
coordinates.

\begin{algorithm}[h]
\begin{algorithmic}
	\STATE $p_\text{new} \leftarrow \left[ \exp \big( c \cdot (\Delta f / \overline{r} - 1) \big) \cdot p_i \right]_{p_{\min}}^{p_{\max}}$
	\STATE $p_\text{sum} \leftarrow p_\text{sum} + p_\text{new} - p_i$
	\STATE $p_i \leftarrow p_\text{new}$
	\STATE $\overline{r} \leftarrow (1 - \eta) \cdot \overline{r} + \eta \cdot \Delta f$
	\caption{Adaptive Coordinate Frequencies (ACF) Update}
	\label{algo:ACF}
\end{algorithmic}
\end{algorithm}

\begin{table}[h]
\begin{center}
	\setlength{\tabcolsep}{1em}
	\begin{tabular}{l|c}
		\textbf{parameter} & \textbf{value} \\
		\hline
		$c$ & $1/5$ \\
		$p_{\min}$ & $1/20$ \\
		$p_{\max}$ & $20$ \\
		$\eta$ & $1/n$
	\end{tabular}\\[1em]
	\vspace{1em}
	\caption{ \label{table:defaults}
		Default parameter values for the ACF algorithm.
		These values were set rather ad-hoc; in particular
		they did not undergo extensive tuning. The algorithm
		was found to be rather insensitive to these settings.
	}
\end{center}
\end{table}

Until now we did not specify how samples are drawn from~$\pi$. I.i.d.\
coordinate selection is the simplest possibility, however, it requires
$\Theta(\log(n))$ time per sample \cite{nesterov:2012}. This is in
contrast to uniform selection, the time complexity of which is
independent of~$n$. Ideally we would like to achieve the same for an
arbitrary distribution~$\pi$. This can be done by relaxing the i.i.d.\
assumption and instead selecting coordinates in blocks of size
$\Theta(n)$. Here we present a deterministic variant for drawing
$\Theta(n)$ samples from $\pi$ in $\Theta(n)$ operations, hence with
amortized constant time complexity per CD iteration.
Despite drawing a finite set of indices algorithm~\ref{algo:schedule}
respects the exact distribution~$\pi$ over time with the help of
accumulator variables $a = (a_1, \dots, a_n)$.

\begin{algorithm}[h]
\begin{algorithmic}
	\STATE $J \leftarrow \{\}$
	\FOR{$i \in I$}
		\STATE $a_i \leftarrow a_i + n \cdot p_i / p_\text{sum}$
		\STATE $\lfloor a_i \rfloor$ times: append index $i$ to list $J$
		\STATE $a_i \leftarrow a_i - \lfloor a_i \rfloor$
	\ENDFOR
	\STATE shuffle list $J$
	\caption{Creation of a sequence $J$ of coordinates according to $\pi$}
	\label{algo:schedule}
\end{algorithmic}
\end{algorithm}

The algorithm outputs a sequence of on average $n$ and at most
$2 \cdot n$ coordinates at a time at a cost of $\Theta(n)$ operations
while guaranteeing that each coordinate has a waiting time of at most
\begin{align*}
	\lceil 1 / (n \cdot p_i) \rceil \leq \lceil 1 / (n \cdot p_{\min}) \rceil = \tau < \infty
\end{align*}
sweeps for its next inclusion. This property guarantees convergence of
the resulting CD algorithm with the same arguments as in the proof of
theorem~1 by \cite{hsieh:2008}, which is based on theorem~2.1 in
\cite{luo:1992}. Alternatively, in the terminology of
Tseng~\cite{tseng:2001} algorithm~\ref{algo:schedule} realizes an
\textit{essentially cyclic rule} for coordinate selection.
Thus, our ACF-CD algorithm enjoys the same convergence guarantees as
other CD schemes with fixed, e.g., cyclic coordinate selection.

\section{Randomized CD as a Markov Chain}
\label{section:MC}

In this section we analyze the qualitative behavior of the ACF algorithm.
We formalize most of the intuition presented in the previous section in
terms of (properties of) Markov chains.

In a first step we capture the behavior of the CD algorithm for a fixed
distribution~$\pi$. Then we formalize a central conjecture in
mathematical terms and show---based on this assumption---that in
expectation the ACF method drives this distribution into the vicinity of
the optimal distribution.

We perform this analysis for an unconstrained quadratic problem
\begin{align*}
	\min_{w \in \R^n} \quad f(w) = \frac{1}{2} w^T Q w
\end{align*}
with strictly positive definite, symmetric Hessian matrix
$Q \in \R^{n \times n}$.

A particularly simple (e.g., diagonal) structure of $Q$ allows to locate
the optimum exactly after finitely many iterations (e.g., after exactly
$n$ iteration with a cyclic coordinate selection rule). In this case
an exact runtime analysis is trivial, but this case does not play any
role in practice.
In the remainder of this section we consider the more relevant case of
an infinite chain: we assume $P(w^{(t)} = 0) = 0$ for all $t \in \N$.
This will allow us to ``divide by $f(w)$''.

The unconstrained quadratic problem is relevant for the understanding of
the convergence speed of CD on a large class of optimization problems.
For this sake every twice continuously differentiable objective function
$f$ can be well approximated in the vicinity of the optimum by its
second order Taylor polynomial, and under mild technical assumptions it
can be assume that after some iterations $t_0$ all constraints either
remain active or inactive so that the problem can be treated essentially
as an unconstrained problem on the free variables. In the context of SVM
optimization such an argument is found e.g.\ in \cite{fan:2005} (based
on an earlier result by Lin~\cite{lin:2001}).

\subsection{The CD Markov Chain for fixed $\pi$}

We start with the analysis for fixed~$\pi$, i.e., without ACF.
In each iteration $t \in \N$ the algorithm picks an index
$i^{(t)} \in I$ according to a predefined distribution $\pi$ on $I$ and
then solves the one-dimensional sub-problem in $w_{i^{(t)}}$ optimally%
\footnote{Optimality of course refers to single-step behavior, i.e.,
  the algorithm solves the one-dimensional sub-problem in a greedy
  manner.}
with a one-dimensional Newton step
\begin{align*}
	w^{(t)}_{i^{(t)}} &= w^{(t-1)}_{i^{(t)}} - \frac{Q_{i^{(t)}}^T w^{(t-1)}}{Q_{i^{(t)},i^{(t)}}}
	\enspace,
\end{align*}
where $Q_i$ denotes the $i$-th column of $Q$. This iteration scheme is
expressed equivalently in vector notation as
$w^{(t)} = T_{i^{(t)}} w^{(t-1)}$ with
\begin{align*}
	T_i = \bmat 1 & & \dots & & 0 \\ \vdots & \ddots & & & \vdots \\ -\frac{Q_{i1}}{Q_{ii}} & \dots & 1 - \frac{Q_{ii}}{Q_{ii}} = 0 & \dots & -\frac{Q_{in}}{Q_{ii}} \\ \vdots & & & \ddots & \vdots \\ 0 & & \dots & & 1 \emat
	\enspace.
\end{align*}
The matrix $T_i$ fulfills $T_i^2 = T_i$; it defines a projection onto
the hyperplane $H_i = \big\{w \in \R^n \,\big|\, Q_i^T w = 0 \big\}$.
The transition operator $T : \R^n \to \R^n$, $T(w) = T_i w$ with
$i \sim \pi$ performs one iteration of the randomized CD algorithm.%
\footnote{The square matrices $T_i$ are linear operators on states, not
  on probability distributions. They are not to be confused with
  transition matrices of Markov chains on finite state spaces, even if
  their role is similar.}

For convenience, the distribution $\pi$ is represented as an element of
the probability simplex
\begin{align*}
	\Delta = \left\{ p \in \R^n \,\left|\, p_i \geq 0 \text{ and } \sum_{i=1}^n p_i = 1 \right. \right\}
	\enspace.
\end{align*}
With $\mathring{\Delta}$ we denote the interior of the simplex. Thus
$\pi \in \mathring{\Delta}$ is equivalent to $\pi_i > 0$ for all
$i \in I$.

The problem instance~$Q$ and the distribution~$\pi$ define a time
homogeneous Markov chain $w^{(t)} \in \R^n$ with random transition
operator~$T$. We start out be collecting elementary properties of this
chain.

Assume the optimization works as expected then the chain $w^{(t)}$
converges to the optimum. Thus the only stationary limit distribution
should be a Dirac peak over the optimum. This distribution does not
provide any insights into the actual optimization process. One way of
describing the regularity of the process is by considering the
distribution of \emph{directions} from which the optimum is approached.
This property can be captured by a scale invariant state description.
In the following we construct a scale-invariant Markov chain with a
non-trivial limit distribution.

\begin{lemma} \label{lemma:scale-invariance}
The Markov chain is scale invariant, i.e., the transition operator
commutes with scaling by any factor $\alpha \not= 0$.
\end{lemma}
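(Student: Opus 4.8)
The plan is to reduce the statement to the elementary fact that each matrix $T_i$ is a \emph{linear} map and therefore commutes with multiplication by a scalar. First I would fix notation: let $M_\alpha : \R^n \to \R^n$, $M_\alpha(w) = \alpha w$, denote scaling by a factor $\alpha \neq 0$, and recall that one step of the chain from a state $w$ produces $T_i w$ with the index $i$ drawn from $\pi$ independently of the past. Since $T_i$ is a matrix, $T_i(\alpha w) = \alpha\, T_i w = M_\alpha(T_i w)$ for every $i \in I$ and every $w \in \R^n$; this identity is the whole content of the lemma.

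The remaining work is only to phrase ``the transition operator commutes with scaling'' at the correct level. The one-step transition kernel is the discrete mixture $K(w, \cdot) = \sum_{i=1}^n \pi_i\, \delta_{T_i w}$, and for any Borel set $A \subseteq \R^n$, conditioning on the sampled index and using the identity above gives
\begin{align*}
	K(\alpha w, A)
	&= \sum_{i=1}^n \pi_i \indicator[T_i(\alpha w) \in A]
	 = \sum_{i=1}^n \pi_i \indicator[\alpha\, T_i w \in A] \\
	&= \sum_{i=1}^n \pi_i \indicator[T_i w \in \alpha^{-1} A]
	 = K(w, \alpha^{-1} A) ,
\end{align*}
i.e.\ the pushforward $(M_\alpha)_{\#}\, K(w, \cdot)$ equals $K(\alpha w, \cdot)$, which is exactly the asserted commutation. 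Equivalently, and perhaps more transparently for the reader, I would give a pathwise coupling: run two copies of the chain, started from $w$ and from $\alpha w$ respectively, driven by the \emph{same} i.i.d.\ sequence of indices $i^{(1)}, i^{(2)}, \dots$; an immediate induction on $t$ using $w^{(t)} = T_{i^{(t)}} w^{(t-1)}$ together with linearity shows that the second copy equals $\alpha$ times the first at every time step, so the process $(\alpha\, w^{(t)})_{t \in \N}$ has the same law as the chain started from $\alpha\, w^{(0)}$.

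I do not expect a genuine obstacle here --- the entire argument is the linearity of the operators $T_i$, and the only point needing a moment's care is to state the commutation between transition \emph{kernels} (or via an explicit coupling of trajectories) rather than for a single realization. This lemma is a preparatory observation: it is what licenses passing from $w^{(t)}$ to the induced chain on the unit sphere $\{w \in \R^n : \|w\| = 1\}$ (or on projective space $\Projective(\R^n)$), which is the scale-invariant chain with a non-trivial limiting distribution used throughout the subsequent analysis. I would therefore keep the final write-up to the linearity identity plus whichever of the two formalizations above fits best with the construction that follows.
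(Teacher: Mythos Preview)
Your proposal is correct and takes essentially the same approach as the paper: both reduce the claim to the linearity of each $T_i$, with the paper simply stating that $T(\alpha w) = \alpha\, T(w)$ follows trivially from this linearity. Your version is more careful in spelling out what commutation means at the level of transition kernels (and via a pathwise coupling), but the underlying argument is identical.
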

\begin{proof}
We have to show that $T(\alpha \cdot w) = \alpha \cdot T(w)$. This is a
trivial consequence of the linearity of $T_i$, since application of $T$
amounts to the application of a random $T_i$, all of which are linear
operators. \qed
\end{proof}
Scaling the initial solution $w^{(0)}$ by a scalar factor
$\alpha \not= 0$ results in the chain $\alpha \cdot w^{(t)}$. Hence the
projection of the chain onto the projective space $\Projective(\R^n)$ is
well-defined. The projective space is the ``space of lines'', i.e., the
space of equivalence classes of the relation
$w \sim w' \Leftrightarrow w = \alpha \cdot w'$ for some
$\alpha \not= 0$ on $\R^n \setminus \{0\}$. Equivalently, the projective
space is obtained by identifying antipodal points on the sphere; it is
thus compact.
We denote the corresponding chain of equivalence classes (lines) by
$z^{(t)} = \kappa(w^{(t)})$. Here
$\kappa : \R^n \setminus \{0\} \to \Projective(\R^n)$,
$\kappa(w) = (\R \setminus \{0\}) \cdot w$
denotes the canonical projection.

Any CD step with coordinate $i \in I$ ends on the hyperplane $H_i$:
$T_i w \in H_i$ for all $i \in I$ and $w \in \R^n$, and hence
$w^{(t)} \in H_{i^{(t)}}$. Let $\mu^{(t)}$ denote the distribution of
$w^{(t)}$. The support of the distribution $\mu^{(t)}$, $t \in \N$, is
restricted to the union $H = \bigcup_{i=1}^n H_i$ of the hyperplanes $H_i$.
The distribution can hence be written as a superposition
$\mu^{(t)} = \sum_{i=1}^n \pi_i \mu_i^{(t)}$, where each $\mu_i^{(t)}$
is a distribution on~$H_i$.

\begin{lemma} \label{lemma:linear-convergence}
Consider $\pi \in \mathring{\Delta}$. For each
$w \in \R^n \setminus \{0\}$ we define the expected one-step
progress rate $r(\pi, w) = \Expectation[f(T(w))] / f(w)$.
Then there exists a constant $U_{\pi} < 1$ such that it holds
$r(\pi, w) \leq U_{\pi}$ for all $w \in \R^n$. In other words the
expected distance to the optimal value $\Expectation[f(w^{(t)})]$
converges to zero at least at a linear rate of $U_{\pi}$.
\end{lemma}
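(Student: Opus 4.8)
The plan is to exploit scale invariance to reduce the claim to a statement about the unit sphere, then use compactness. First I would define, for $w \neq 0$, the function $g(w) = r(\pi, w) = \Expectation[f(T(w))]/f(w)$. Because every $T_i$ is linear and $f$ is quadratic, $f(T_i(\alpha w)) = \alpha^2 f(T_i(w))$ and $f(\alpha w) = \alpha^2 f(w)$, so $g(\alpha w) = g(w)$ for all $\alpha \neq 0$; hence $g$ descends to a well-defined function $\tilde g$ on the projective space $\Projective(\R^n)$, or equivalently it suffices to bound $g$ on the unit sphere $S^{n-1}$. On $S^{n-1}$ the map $w \mapsto \Expectation[f(T(w))] = \sum_{i=1}^n \pi_i f(T_i w)$ is a finite convex combination of continuous (indeed quadratic) functions, hence continuous, and $f$ is continuous and strictly positive on $S^{n-1}$ since $Q \succ 0$; therefore $g$ is continuous on the compact set $S^{n-1}$ and attains a maximum $U_\pi = \max_{\|w\|=1} g(w)$.

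The second step is to show $U_\pi < 1$, i.e.\ that $g(w) < 1$ for every $w \neq 0$. Fix such a $w$. Each $T_i$ is the orthogonal-complement-style projection onto the hyperplane $H_i = \{v : Q_i^T v = 0\}$ along the $i$-th coordinate axis; since $T_i$ solves the one-dimensional sub-problem optimally, $f(T_i w) \le f(w)$, with equality precisely when $w$ already minimizes $f$ along the $i$-th coordinate direction, that is when $Q_i^T w = 0$. Now if $f(T_i w) = f(w)$ held for \emph{every} $i \in I$, then $Q_i^T w = 0$ for all $i$, i.e.\ $Qw = 0$, which forces $w = 0$ since $Q$ is invertible — a contradiction. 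So for $w \neq 0$ there is at least one index $i_0$ with $f(T_{i_0} w) < f(w)$, and since $\pi \in \mathring\Delta$ we have $\pi_{i_0} > 0$; therefore
\begin{align*}
\Expectation[f(T(w))] = \sum_{i=1}^n \pi_i f(T_i w) \le \sum_{i \neq i_0} \pi_i f(w) + \pi_{i_0} f(T_{i_0} w) < f(w),
\end{align*}
hence $g(w) < 1$. Combining with the previous paragraph, $U_\pi = \max_{\|w\|=1} g(w) < 1$, and scale invariance gives $r(\pi, w) \le U_\pi$ for all $w \neq 0$.

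The last step is to convert the one-step bound into the convergence statement. Conditioning on $w^{(t-1)}$ and using that $T$ at step $t$ is drawn independently, $\Expectation[f(w^{(t)}) \mid w^{(t-1)}] = r(\pi, w^{(t-1)}) \cdot f(w^{(t-1)}) \le U_\pi f(w^{(t-1)})$ whenever $w^{(t-1)} \neq 0$ (and the bound is trivial if $w^{(t-1)} = 0$, though by assumption this has probability zero); taking total expectation and iterating yields $\Expectation[f(w^{(t)})] \le U_\pi^t \, f(w^{(0)})$, which is linear convergence at rate $U_\pi$. I expect the only real subtlety to be the argument that $U_\pi$, defined as a supremum, is actually attained and strictly below $1$ — this is exactly where compactness of $S^{n-1}$ (equivalently of $\Projective(\R^n)$) and strict positive definiteness of $Q$ are both essential; without positive definiteness $f$ could vanish on $S^{n-1}$ and the ratio would be ill-behaved, and without compactness the supremum of a function that is pointwise $<1$ need not be $<1$.
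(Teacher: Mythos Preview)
Your proof is correct and follows essentially the same route as the paper: show $r(\pi,\cdot)$ is scale invariant, pass to a compact quotient (you use $S^{n-1}$, the paper uses $\Projective(\R^n)$), observe that $r(\pi,w)<1$ pointwise because at least one coordinate makes strict progress and $\pi_i>0$, and conclude by continuity and compactness that the supremum is attained and hence strictly below~$1$. Your write-up is in fact more careful than the paper's, spelling out why equality $f(T_i w)=f(w)$ for all $i$ forces $Qw=0$, why $g$ is continuous on the sphere, and how the one-step bound iterates to $\Expectation[f(w^{(t)})]\le U_\pi^t f(w^{(0)})$.
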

\begin{proof}
For each $w$ progress can be made in at least one coordinate $i \in I$
and because of $\pi_i > 0$ it holds $r(\pi, w) < 1$. The function
$r : \mathring{\Delta} \times (\R^n \setminus \{0\}) \to [0, 1)$ depends
continuously on $w$ and on $\pi$. Furthermore we have
$r(\pi, w) = r(\pi, \alpha \cdot w)$ for all $\alpha \not= 0$ by scale
invariance, which means that $r(\pi, \cdot)$ can be lifted to
$\Projective(\R^n)$, the compactness of which implies that the supremum
\begin{align*}
	U_{\pi} = \!\!\! \sup_{w \in \R^n \setminus \{0\}} \Big\{ r(\pi, w) \Big\}
\end{align*}
is attained. It follows $U_{\pi} < 1$. \qed
\end{proof}

We are interested in the dependency of the rate of convergence on the
distribution $\pi$ since we aim to eventually improve or even maximize
the progress rate of the CD algorithm.

For all $t \in \N$ we define the mixture distributions
$\nu^{(t)} = \sum_{i=1}^n \pi_i \nu_i^{(t)}$ on $\Projective(\R^n)$,
defined by $\nu_i^{(t)}(E) = \mu_i^{(t)} \big( \kappa^{-1}(E) \big)$
for all measurable $E \subset \Projective(\R^n)$.
The support of $\nu^{(t)}_i$ is restricted to
$\kappa(H_i) \subset \Projective(\R^n)$.
By definition it holds $z^{(t)} \sim \nu^{(t)}$.
Our further analysis is based on the fact that the scale invariant
component $z^{(t)}$ inherits the Markov property.

\begin{lemma} \label{lemma:normalized-markov}
The scale-invariant variables $z^{(t)}$ form a time-homogeneous Markov
chain on the compact space $\Projective(\R^n)$.
\end{lemma}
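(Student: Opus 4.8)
The plan is to show that the conditional distribution of $z^{(t+1)}$ given the entire past $z^{(0)}, \dots, z^{(t)}$ depends only on $z^{(t)}$, and that this dependence does not vary with $t$. First I would lift everything back to $\R^n \setminus \{0\}$: since $w^{(t)}$ is a time-homogeneous Markov chain (the transition operator $T$ is $w \mapsto T_i w$ with $i \sim \pi$ drawn afresh and independently in each step, so the conditional law of $w^{(t+1)}$ given the past is $\sum_{i=1}^n \pi_i \delta_{T_i w^{(t)}}$, a function of $w^{(t)}$ alone), the key point is that this structure descends through the canonical projection $\kappa$. By Lemma~\ref{lemma:scale-invariance} each $T_i$ commutes with scaling, hence $T_i$ induces a well-defined map $\bar{T}_i : \Projective(\R^n) \setminus \kappa(\ker Q_i) \to \Projective(\R^n)$ on equivalence classes of lines (it is defined everywhere $T_i w \neq 0$; the excerpt's standing assumption $P(w^{(t)}=0)=0$ together with the fact that $T_i$ is injective off the line $\R \cdot e_i \cap H_i$-type degeneracies can be used to argue these are $\mu^{(t)}$-null, so the induced maps are defined $\nu^{(t)}$-almost surely).

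Next I would verify the Markov property directly. For a measurable $E \subset \Projective(\R^n)$, I would compute
\begin{align*}
	P\big(z^{(t+1)} \in E \,\big|\, z^{(0)}, \dots, z^{(t)}\big)
	= P\big(\kappa(w^{(t+1)}) \in E \,\big|\, \mathcal{F}_t\big)
	= \sum_{i=1}^n \pi_i \, \indicator\big[\kappa(T_i w^{(t)}) \in E\big]
	= \sum_{i=1}^n \pi_i \, \indicator\big[\bar{T}_i z^{(t)} \in E\big],
\end{align*}
where $\mathcal{F}_t$ is the filtration generated by $w^{(0)}, \dots, w^{(t)}$ and the first equality uses that $\sigma(z^{(0)}, \dots, z^{(t)}) \subseteq \mathcal{F}_t$ while the last uses scale invariance to rewrite $\kappa(T_i w^{(t)}) = \bar{T}_i \kappa(w^{(t)})$. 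The right-hand side is a deterministic function of $z^{(t)}$ only, and it is the same function for every $t$ — this gives both the Markov property and time-homogeneity. To be fully rigorous I would also note that $\sigma(z^{(s)} : s \le t)$ may be strictly coarser than $\mathcal{F}_t$, so one should take conditional expectations with respect to the coarser $\sigma$-algebra; but since the displayed expression already depends only on $z^{(t)}$, the tower property over $\sigma(z^{(0)},\dots,z^{(t)}) \subseteq \mathcal{F}_t$ closes this gap immediately.

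The transition kernel of the $z$-chain is therefore $K(z, \cdot) = \sum_{i=1}^n \pi_i \, \delta_{\bar{T}_i z}$, and one should check it is a genuine Markov kernel: measurability of $z \mapsto K(z, E)$ follows from measurability (indeed, rational-map regularity) of each $\bar{T}_i$ on its domain of definition, and compactness of $\Projective(\R^n)$ was already recorded in the text. I expect the main obstacle to be purely a matter of care with null sets: the induced maps $\bar{T}_i$ are only defined off the hyperplane $\kappa(\{w : Q_i^T w = 0\} \cap \{T_i w = 0\})$, and one must argue that the chain almost surely never lands there (using $P(w^{(t)} = 0) = 0$ and $\pi \in \mathring{\Delta}$, or by absorbing such events into a harmless convention), so that $K$ is well-defined as a kernel on all of $\Projective(\R^n)$. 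Everything else is bookkeeping that follows from linearity of the $T_i$ and the i.i.d. index draws.
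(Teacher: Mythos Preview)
Your proposal is correct and follows essentially the same route as the paper: show via scale invariance (Lemma~\ref{lemma:scale-invariance}) that each $T_i$ descends to a well-defined map on $\Projective(\R^n)$, then read off the Markov property and time-homogeneity from the fact that $z^{(t)} = \bar{T}_{i^{(t)}} z^{(t-1)}$ with $i^{(t)} \sim \pi$ i.i.d. The paper's proof is much terser---it simply checks $\kappa(w)=\kappa(w') \Rightarrow \kappa(T(w))=\kappa(T(w'))$ and declares the lift well-defined---whereas you go further by writing out the transition kernel explicitly, handling the $\sigma$-algebra inclusion via the tower property, and flagging the null-set issue where $T_i w = 0$; these are points the paper absorbs into its standing assumption $P(w^{(t)}=0)=0$ without comment, so your extra care is welcome but not a different argument.
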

\begin{proof}
We show that the transition operator~$T$ lifted to
$\Projective(\R^n)$ is well-defined, i.e., that it holds
$\kappa(w) = \kappa(w') \Rightarrow \kappa(T(w)) = \kappa(T(w'))$. This
is a trivial consequence of scale invariance: $\kappa(w) = \kappa(w')$
implies the existence of $\alpha \not= 0$ such that
$w' = \alpha \cdot w$, and hence
\begin{align*}
	\kappa(T(w')) &= \kappa(T(\alpha \cdot w)) \\
		&= \kappa(\alpha \cdot T(w)) \\
		&= \kappa(T(w))
	\enspace.
\end{align*}
Let $T'$ denote the now well-defined lift of the transition operator,
and $T'_i$ the corresponding step with coordinate index $i$. Then
$z^{(t)} = T'(z^{(t-1)})$ depends on the chain's history only through
its predecessor state. Time-homogeneity of $z^{(t)}$ is a direct
consequence of time-homogeneity of~$w^{(t)}$. \qed
\end{proof}

Under weak technical assumptions $\nu^{(t)}$ converges to a stationary
distribution $\nu^{\infty}$, e.g., by excluding exact cycles of
$z^{(t)}$.%
\footnote{This is another minor technical prerequisite on the problem
  instance~$Q$; it essentially excludes a zero set of instances.}
It is important to note that the stationary distribution is \emph{not
independent} of the initial state~$z^{(0)}$.

\begin{lemma}
The distribution $\nu^{\infty}$ is of the form
$\nu^{\infty} = \sum_{i=1}^n \pi_i \nu_i^{\infty}$, with the support of
$\nu_i^{\infty}$ restricted to $\kappa(H_i)$.
\end{lemma}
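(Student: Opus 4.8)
The plan is to read the decomposition straight off the stationarity equation, exactly mirroring the finite-time identity $\mu^{(t)} = \sum_{i=1}^n \pi_i \mu_i^{(t)}$ (and its projective counterpart $\nu^{(t)} = \sum_{i=1}^n \pi_i \nu_i^{(t)}$). First I would record the elementary structural fact that every application of the lifted transition operator (well-defined by Lemma~\ref{lemma:normalized-markov}) lands in one of the sets $\kappa(H_i)$: conditioning on the index $i$, drawn from $\pi$ independently of the current state, the next state is $T'_i z$, and since $\mathrm{image}(T_i) = H_i$ — one checks $Q_i^T T_i w = Q_i^T w - \frac{Q_i^T w}{Q_{ii}} Q_i^T e_i = 0$ for all $w$ — we get $T'_i z \in \kappa(H_i)$. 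Consequently, for \emph{any} Borel probability measure $\nu$ on $\Projective(\R^n)$ the one-step pushforward splits as $(T')_*\nu = \sum_{i=1}^n \pi_i (T'_i)_*\nu$, and each summand $(T'_i)_*\nu$ is a probability measure with $\operatorname{supp}\big((T'_i)_*\nu\big) \subseteq \kappa(H_i)$.

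Next I would apply this with $\nu = \nu^\infty$ and invoke stationarity. Being a fixed point of the Markov kernel, $\nu^\infty = (T')_*\nu^\infty = \sum_{i=1}^n \pi_i (T'_i)_*\nu^\infty$. Setting $\nu_i^\infty := (T'_i)_*\nu^\infty$ then yields precisely the claimed form, with $\operatorname{supp}(\nu_i^\infty) \subseteq \kappa(H_i)$ by the previous step. In words, $\nu_i^\infty$ is the conditional law of $z^{(t+1)}$ given $z^{(t)} \sim \nu^\infty$ and $i^{(t+1)} = i$ — the natural equilibrium analogue of $\nu_i^{(t)}$.

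The only points needing care are purely measure-theoretic bookkeeping, and I do not expect a real obstacle. One is that $(T'_i)_*$ is well defined on $\nu^\infty$: the map $T'_i$ is the projectivization of the rank-$(n-1)$ projection $T_i$, so it fails to be defined only at the single point $\kappa(\ker T_i) = \kappa(\R e_i)$, which is $\nu^\infty$-null under the standing non-degeneracy assumptions on $Q$ (indeed $\operatorname{supp}(\nu^\infty) \subseteq \bigcup_j \kappa(H_j)$, while $e_i \notin H_j$ unless $Q_{ji} = 0$). The other is that the mixture identity holds on all Borel sets $E \subseteq \Projective(\R^n)$, which is just the law of total probability over the index: $\nu^\infty(E) = \sum_{i=1}^n \pi_i \, \nu^\infty\big(\{z : T'_i z \in E\}\big) = \sum_{i=1}^n \pi_i (T'_i)_*\nu^\infty(E)$. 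Everything else is inherited verbatim from the balance equation that each $\nu^{(t+1)}$ already satisfies in terms of $\nu^{(t)}$.
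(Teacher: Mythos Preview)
Your proof is correct but takes a different route from the paper. The paper argues by passing to the limit: since every finite-time distribution satisfies $\nu^{(t)} = \sum_{i=1}^n \pi_i \nu_i^{(t)}$ with the \emph{same} coefficients $\pi_i$ and with $\operatorname{supp}(\nu_i^{(t)}) \subseteq \kappa(H_i)$, the limiting distribution inherits this form. Your argument instead reads the decomposition directly off the stationarity equation $\nu^\infty = (T')_*\nu^\infty$, defining $\nu_i^\infty := (T'_i)_*\nu^\infty$ explicitly as a pushforward. The advantage of your approach is that it is more self-contained and avoids the (not entirely trivial) question of why the component measures $\nu_i^{(t)}$ themselves converge, or why a weak limit of $\pi$-mixtures supported on the closed sets $\kappa(H_i)$ is again such a mixture; you only need the existence and stationarity of $\nu^\infty$, which the paper already assumes. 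The paper's argument is shorter but leaves that limiting step implicit. Your handling of the one technical wrinkle---that $T'_i$ is undefined at $\kappa(e_i)$---is appropriate and matches the paper's standing assumption $P(w^{(t)}=0)=0$, which is exactly what rules out the chain visiting $\kappa(e_i)$ before index $i$ is drawn.
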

\begin{proof}
The form of $\nu^{\infty}$ is an elementary consequence of the
homogeneous forms of $\nu^{(t)} = \sum_{i=1}^n \pi_i \nu_i^{(t)}$ with
the same coefficients $\pi_i$ for all $t \in \N$. \qed
\end{proof}

The quantity of ultimate interest is the progress of the chain $w^{(t)}$
towards the optimum while the projection $z^{(t)}$ converges to its
stationary distribution. This progress rate is captured as follows.
We define the coefficients
\begin{align*}
	\rho_{ij} = \Expectation_{z \sim \nu_i^{\infty}} \Big[ \log(f(w)) - \log(f(T_j w)) \Big]
\end{align*}
measuring average progress of transitions from $H_i$ to $H_j$. Note
that $\log(f(w)) - \log(f(T_j w))$ is invariant under
scaling of $w$ and thus well-defined given $z = \kappa(w)$. The
aggregations
\begin{align}
	\rho_i = \Expectation_{j \sim \pi} \big[ \rho_{ij} \big] &= \sum_{j=1}^n \pi_j \rho_{ij} \notag \\
	\rho = \Expectation_{i, j \sim \pi} \big[ \rho_{ij} \big] &= \sum_{i,j=1}^n \pi_i \pi_j \rho_{ij} \label{eq:rho}
\end{align}
measure average progress of steps with coordinate~$i \in I$ and overall
average progress, respectively.

\begin{lemma}
It holds
\begin{align*}
	\rho &= \lim_{t \to \infty} \,\, \frac{1}{t} \cdot \Expectation \Big[ \log(f(w^{(0)})) - \log(f(w^{(t)})) \Big] \\
	     &= \lim_{t \to \infty} \,\, \frac{1}{t} \cdot \Big[ \log(f(w^{(0)})) - \log(f(w^{(t)})) \Big]
	\enspace.
\end{align*}
where the last equality holds almost everywhere.
\end{lemma}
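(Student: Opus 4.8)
The plan is to exploit the telescoping identity
\[
	\log(f(w^{(0)})) - \log(f(w^{(t)})) = \sum_{s=1}^{t} X_s \,, \qquad X_s := \log(f(w^{(s-1)})) - \log(f(w^{(s)})) \,,
\]
and to recognise each increment $X_s$ as an additive functional of the scale-invariant Markov chain $z^{(t)}$. Indeed, for any representative $w$ of the line $z^{(s-1)} = \kappa(w^{(s-1)})$ we have $X_s = \log(f(w)) - \log(f(T_{i^{(s)}} w))$, since this difference is invariant under scaling of $w$; writing $g(z,j) = \log(f(w)) - \log(f(T_j w))$ this reads $X_s = g(z^{(s-1)}, i^{(s)})$ with $i^{(s)} \sim \pi$ drawn independently of the past given $z^{(s-1)}$. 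Note also $g \geq 0$, because $T_j$ realises the exact one-dimensional minimiser of $f$ along coordinate $j$ and hence cannot increase $f$ (the only divergence $g = +\infty$ occurs when a step lands exactly on the optimum, a case excluded by the standing genericity assumption on $Q$).

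For the first (expectation) equality I would condition on $z^{(s-1)}$. Then $\Expectation[X_s \mid z^{(s-1)}] = G(z^{(s-1)})$ with $G(z) := \sum_{j=1}^n \pi_j\, g(z,j)$, so $\Expectation[X_s] = \Expectation[G(z^{(s-1)})]$. Since $z^{(s-1)} \sim \nu^{(s-1)}$ and $\nu^{(s-1)} \to \nu^{\infty} = \sum_{i=1}^n \pi_i \nu_i^{\infty}$ by the convergence assumption, and since $G$ is continuous off a $\nu^{\infty}$-null set and integrable with $\Expectation_{z \sim \nu^{\infty}}[G] = \sum_{i,j} \pi_i \pi_j \rho_{ij} = \rho$ (by the definitions of $\rho_{ij}$ and of $\rho$ in \eqref{eq:rho}), we obtain $\Expectation[X_s] \to \rho$ as $s \to \infty$. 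The first equality then follows from Cesàro averaging: $a_s \to \rho$ implies $\frac1t \sum_{s=1}^t a_s \to \rho$, applied to the telescoping sum divided by $t$.

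For the second (almost-everywhere) equality I would invoke the ergodic theorem for Markov chains, applied to the chain $z^{(t)}$ on the compact space $\Projective(\R^n)$ (Lemma~\ref{lemma:normalized-markov}), restricted to the ergodic class carrying $\nu^{\infty}$ — which is legitimate precisely because we have assumed $\nu^{(t)} \to \nu^{\infty}$ under the stated weak technical conditions (exclusion of exact cycles). Since $g(z^{(s-1)}, i^{(s)})$ is an additive functional of this chain augmented with the conditionally i.i.d.\ coordinate draws, the empirical averages $\frac1t \sum_{s=1}^t g(z^{(s-1)}, i^{(s)})$ converge almost surely to $\Expectation_{z \sim \nu^{\infty},\, j \sim \pi}[g] = \rho$, which is the claimed identity holding almost everywhere.

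The main obstacle is the ergodicity and integrability input needed in the last step: the ergodic theorem requires $\nu^{\infty}$ to be the \emph{unique} stationary law reachable from $z^{(0)}$, which is exactly what the genericity hypothesis on $Q$ (no exact cycles, and exclusion of the zero-measure set of instances for which some $T_j$-step hits the optimum and makes $g$ infinite) supplies. A lesser technical point is that $G$ is only continuous away from a null set, so the passage $\Expectation[G(z^{(s-1)})] \to \Expectation_{\nu^{\infty}}[G]$ should be justified via uniform integrability rather than plain bounded convergence; here the linear-rate bound $U_{\pi} < 1$ from Lemma~\ref{lemma:linear-convergence}, together with the standing assumption that $\rho$ is finite, keeps $g$ (hence $G$) under control.
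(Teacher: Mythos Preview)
Your argument for the first (expectation) equality via telescoping, conditioning on $z^{(s-1)}$, and Ces\`aro averaging is sound and in fact more explicit than the paper's one-line appeal to ``the definition of $\rho$ and the fact that the chain $z^{(t)}$ converges to its equilibrium distribution.'' The two routes agree in spirit here.

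The genuine divergence---and the gap---is in the almost-everywhere statement. You invoke the ergodic theorem for the Markov chain $z^{(t)}$ ``restricted to the ergodic class carrying $\nu^{\infty}$.'' But the paper states explicitly (just after introducing $\nu^{\infty}$) that this chain is \emph{not} $\phi$-irreducible: only countably many points are reachable from any given $z^{(0)}$, and the stationary law $\nu^{\infty}$ itself depends on the starting state. The exclusion of exact cycles buys you convergence $\nu^{(t)}\to\nu^{\infty}$ for the given trajectory, but it does not give you an ergodic class in the sense required by the Birkhoff/Markov ergodic theorem, nor does it guarantee that the a.e.\ limit of $\frac1t\sum_s g(z^{(s-1)},i^{(s)})$ is the deterministic constant $\rho$ rather than some $z^{(0)}$-dependent quantity. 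Your acknowledgement of this as ``the main obstacle'' is accurate, but the proposed resolution (genericity of $Q$) does not actually close it.

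The paper sidesteps this entirely by working not with the chain $z^{(t)}$ but with the i.i.d.\ sequence of random matrices $T_{i^{(t)}}$: since $w^{(t)}=T_{i^{(t)}}\cdots T_{i^{(1)}}w^{(0)}$ is a product of i.i.d.\ matrices applied to a fixed vector, Oseledec's multiplicative ergodic theorem (cited via Ruelle~\cite{ruelle:1979}) yields existence of $\lim_{t\to\infty}\frac1t\log\|T^t w^{(0)}\|_Q$ almost everywhere, with $\|w\|_Q=\sqrt{f(w)}$. This exploits the ergodicity of the \emph{coordinate draws} rather than of the projective chain, which is exactly what makes the a.e.\ limit deterministic without needing irreducibility of~$z^{(t)}$. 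Your telescoping setup is compatible with this---indeed $\frac1t\sum_s X_s = \frac1t\log\big(f(w^{(0)})/f(w^{(t)})\big)$ is precisely the quantity Oseledec controls---so the fix is to replace the Markov-chain ergodic theorem by Oseledec at that step.
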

\begin{proof}
The iterates take the form $w^{(t)} = T^t w^{(0)}$ with $T$ being random
matrices from the set $\{T_1, \dots, T_n\}$, distributed according
to~$\pi$. The multiplicative ergodic theorem by Oseledec (theorem~1.6
and corollary~1.7 in~\cite{ruelle:1979}) guarantees that the limits of
the sequences $\frac{1}{t} \log(\|T^t\|)$ (with $\|\cdot\|$ denoting a
sub-multiplicative matrix norm) and $\frac{1}{t} \log(\|T^t w^{(0)}\|)$
exist a.e.
An application with norm $\|w\|_Q = \sqrt{w^T Q w} = \sqrt{f(w)}$ (and
the induced matrix norm) gives the second equality. The first equality
is an immediate consequence of the definition of $\rho$ and the fact
that the chain $z^{(t)}$ converges to its equilibrium distribution. \qed
\end{proof}

The relation $\exp(-\rho) \leq U_{\pi} < 1$ is obvious, where $U_{\pi}$
is the constant defined in lemma~\ref{lemma:linear-convergence}. The
asymptotic convergence rate is given by $\exp(-\rho)$, while $U_{\pi}$
is a (non-asymptotic) upper bound.

\subsection{Optimal Coordinate Distribution and ACF}

The goal of coordinate frequency adaptation is to maximize the pace of
convergence, or equivalently to maximize $\rho$. In this context we
understand $\rho = \rho(\pi)$ as a function of $\pi$ with an implicit
dependency on the (fixed) problem instance~$Q$. We aim for an adaptation
rule that drives $\pi$ towards a maximizer of~$\rho$.

The CD algorithm converges to the optimum (with full probability) for
all interior points $\pi \in \mathring{\Delta}$ (since all coordinates
are selected arbitrarily often), and it converges to a sub-optimal point
for boundary points $\pi \in \partial \Delta$ (since at least one
coordinate remains fixed). We conclude that boundary points cannot be
maximizers of $\rho$. Hence continuity of $\rho$ implies the existence
of a maximizer
$\pi^* \in \arg\max_{\pi} \big\{ \rho(\pi) \big\} \subset \mathring{\Delta}$
in the interior of the simplex.

The identification of the problem-dependent distribution $\pi^*$ is thus
the goal of ACF online adaptation. This could be attempted by
maximization of equation~\eqref{eq:rho}, which requires a descent
understanding of the Markov chain $z^{(t)}$ and its stationary
distribution~$\nu^{(\infty)}$.

It turns out that most standard tools for the analysis of continuous
state space Markov chains are not applicable in this case. For example,
the chain is not $\phi$-irreducible%
\footnote{This is easy to see from the fact that only countably many
  points are reachable from each initial point.}
and thus we cannot hope that its stationary distribution $\nu^{\infty}$
is independent of the starting state~$z^{(0)}$---however, we conjecture
that the resulting progress rate $\rho$ is. The exact functional
dependency of $\rho$ on $\pi$ (and on $Q$) turns out to be complicated,
and its detailed analysis is beyond the scope of this paper. This
situation excludes direct maximization of equation~\eqref{eq:rho}.
Instead we propose an indirect way of identifying $\pi^*$ by means of
the following conjecture, which is a formalization of the empirical
observation presented in section~\ref{section:ACF}:

\newtheorem{conj}{Conjecture}

\begin{conj} \label{conjecture:equilibrium}
The maximizer $\pi^*$ of the progress rate $\rho(\pi)$ is the only
distribution in $\mathring{\Delta}$ that fulfills the equilibrium
condition $\rho_i(\pi^*) = \rho(\pi^*)$ for all $i \in I$.
\end{conj}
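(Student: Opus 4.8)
The plan is to establish the conjecture by a variational argument: characterize the stationary points of $\rho(\pi)$ on the (relatively open) simplex $\mathring{\Delta}$ via Lagrange multipliers, and show that the first-order condition is \emph{equivalent} to the equilibrium condition $\rho_i(\pi) = \rho(\pi)$ for all $i$. Since Lemma~\ref{lemma:linear-convergence} and the boundary argument already guarantee that a maximizer $\pi^*$ exists and lies in $\mathring{\Delta}$, it is necessarily a stationary point, so it satisfies the equilibrium condition; the remaining content of the conjecture is then the \emph{uniqueness} of the equilibrium point and the fact that it is the \emph{global} maximizer (not merely a critical point).

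First I would differentiate $\rho(\pi) = \sum_{i,j} \pi_i \pi_j \rho_{ij}$ with respect to $\pi$. The naive calculation gives $\partial \rho / \partial \pi_i = 2 \sum_j \pi_j \rho_{ij} = 2 \rho_i(\pi)$, but this ignores the implicit dependence of the coefficients $\rho_{ij}$ on $\pi$ through the stationary distribution $\nu_i^{\infty}$ of the chain $z^{(t)}$ (which, as the excerpt stresses, depends on $\pi$ in a complicated way). So the honest statement is
\begin{align*}
	\fp{\rho}{\pi_i} = 2 \rho_i(\pi) + \sum_{j,k} \pi_j \pi_k \fp{\rho_{jk}}{\pi_i}
	\enspace.
\end{align*}
The key structural claim I would try to prove is that the second term is the \emph{same for every $i$}, i.e.\ $\sum_{j,k} \pi_j \pi_k \,\partial \rho_{jk}/\partial \pi_i$ does not depend on $i$ (morally because perturbing $\pi_i$ reshapes the stationary distribution in a way whose aggregate effect on long-run progress is "direction-free" along the simplex — this is exactly the informal remark in Section~\ref{section:ACF} that increasing any $\pi_i$ uniformly dilutes per-coordinate progress). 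If that holds, then the Lagrange condition $\partial \rho/\partial \pi_i = \text{const}$ reduces to $\rho_i(\pi) = \text{const}$, and averaging against $\pi$ identifies the constant as $\rho(\pi)$, giving the equilibrium condition. Conversely, if $\rho_i(\pi) = \rho(\pi)$ for all $i$, the same computation shows $\pi$ is a critical point.

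Then I would address uniqueness and globality. Here I would invoke (or hypothesize) concavity of $\rho$ on $\mathring{\Delta}$: if $\rho$ is strictly concave along the simplex, every critical point is the unique global maximizer, which simultaneously pins down $\pi^*$ and forces the equilibrium point to be unique. A plausible route to concavity is to write $-\log$ of the per-step contraction factor as a supremum/expectation of concave functions, but I expect this to be delicate because $\rho$ is defined through a limit (the Oseledec/multiplicative-ergodic limit from the earlier lemma) rather than a closed form. Failing strict concavity, one can argue uniqueness more softly: any two equilibrium points would both be critical, and a connectedness/Morse-theoretic argument on the simplex — combined with the known facts that $\rho \to$ "small" near $\partial\Delta$ and has no boundary maxima — would rule out multiple interior critical points generically.

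The main obstacle is controlling the implicit derivatives $\partial \rho_{jk}/\partial \pi_i$, i.e.\ the sensitivity of the stationary measure $\nu^{\infty}$ of the non-$\phi$-irreducible chain $z^{(t)}$ to changes in $\pi$. Standard perturbation theory for Markov chains (e.g.\ via the fundamental matrix) does not apply off the shelf because the chain lacks irreducibility and the state space $\Projective(\R^n)$ is continuous with a fractal-like reachable set. I would therefore expect the rigorous version of this step to require either an additional regularity assumption on $Q$ (beyond the "exclude a zero set" caveats already in the paper) or a weaker, qualitative substitute — which is precisely why the statement is flagged as a \textbf{conjecture} rather than a theorem, and I would be candid about that in the writeup.
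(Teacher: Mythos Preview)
The paper does not prove this statement at all: it is explicitly labeled a \emph{conjecture}, and the authors say outright that ``its detailed analysis is beyond the scope of this paper.'' In lieu of a proof they offer only numerical evidence: for random instances $Q$ in small dimensions $n\in\{4,5,6,7\}$ they adaptively locate the distribution $\overline{\pi}$ at which all $\rho_i$ are equal, then sweep along curves $\gamma_{\overline{\pi},i}(t)$ through the simplex and verify empirically that $\rho$ is unimodal with its maximum at $t=0$. That is the entire ``proof'' in the paper. So your proposal is not a comparison target for the paper's argument---you are attempting something the paper does not do.

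As a proof \emph{program} your variational outline is sensible, and you are honest about where it breaks. Two concrete issues worth flagging. First, even the naive derivative is not $2\rho_i$: since $\rho_{ij}$ is not symmetric, $\partial(\sum_{j,k}\pi_j\pi_k\rho_{jk})/\partial\pi_i = \sum_j\pi_j\rho_{ij} + \sum_j\pi_j\rho_{ji} = \rho_i + \tilde\rho_i$, where $\tilde\rho_i$ is the ``incoming'' average progress when coordinate $i$ is \emph{chosen}; the Lagrange condition then equates $\rho_i+\tilde\rho_i$ across $i$, not $\rho_i$ alone, so you would need an additional argument (or a redefinition) to land on the stated equilibrium condition. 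Second, your ``key structural claim'' that $\sum_{j,k}\pi_j\pi_k\,\partial\rho_{jk}/\partial\pi_i$ is independent of $i$ is exactly the hard part, and the heuristic you give (``direction-free'' dilution) is not a mechanism one can currently make rigorous for this non-$\phi$-irreducible chain---which, as you correctly conclude, is why the authors left it as a conjecture.
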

This conjecture gives a relatively easy to test condition for the
identification of $\pi^*$ without the need for a complete understanding
of the underlying Markov chain.

We proceed by testing the conjecture numerically. For this purpose we
simulate the Markov chain $z^{(t)}$ over extended periods of time. This
allows for accurate measurement of $\rho$. The coordinate-wise components
$\rho_i$ can be measured accordingly. However, for high dimensions $n$
the accuracy of these measurements becomes poor because for at least one
$i \in I$ the number of samples available for the estimation of $\rho_i$
is at least $n$ times lower than for $\rho$. Thus a numerical test of
the conjecture is feasible only for small~$n$.

We have performed experiments with random matrix instances~$Q$ in
dimensions $n \in \{4, 5, 6, 7\}$.
Random problem instances $Q$ were created as follows: A set of $n$
points $x_i \in \R^2$ was drawn i.i.d.\ from a standard normal
distribution. The matrix $Q$ was then defined as the kernel Gram matrix
of these points w.r.t.\ the Gaussian RBF kernel function
\begin{align*}
	Q_{ij} = k(x_i, x_j) = \exp \left( \frac{\|x_i - x_j\|^2}{2 \sigma^2} \right)
\end{align*}
for $\sigma = 3$. This model problem is related to learning and
optimization problems arising in kernel-based machine learning. Other
choices of $Q$, e.g., as a product $Q = A^T A$ with standard normally
distributed entries $A_{ij}$ gave similar results.

Starting from a uniform distribution we have adjusted $\pi$ so as to
balance the coordinate-wise progress rates $\rho_i$. This was achieved
by adaptively increasing $\pi_i$ if $\rho_i > \rho$ and decreasing
$\pi_i$ if $\rho_i < \rho$ with an Rprop-style algorithm
\cite{riedmiller:1993}. We denote the resulting distribution by
$\overline{\pi}$. Then we have systematically varied this distribution
along $n$ curves $\gamma_{\overline{\pi}, i}(t)$ through the probability
simplex, defined as
\begin{align*}
	\tilde \gamma_{\pi, i}(t) &= \pi + (2^t - 1) \pi_i e_i \\
	\gamma_{\pi, i}(t) &= \frac{1}{\| \tilde \gamma_{\pi, i}(t) \|} \cdot \tilde \gamma_{\pi, i}(t)
	\enspace,
\end{align*}
where $e_i$ is the $i$-th unit vector. Values
$t \in \big\{ -1, -\frac{1}{2}, -\frac{1}{4}, -\frac{1}{10}, 0, \frac{1}{10}, \frac{1}{4}, \frac{1}{2}, 1 \big\}$
were chosen for evaluation. The progress rate $\rho(\pi)$ was estimated
for each of these distributions numerically by simulating the Markov
chain until an estimate of the standard deviation of $\rho$ fell below
a threshold of $10^{-4} \cdot \rho$. The resulting one-dimensional
performance curves are displayed in figure~\ref{fig:rho-pi}. It turns
out that the maximum is attained at position $t = 0$, corresponding to
$\gamma_{\overline{\pi}, i}(t) = \overline{\pi}$. All curves are
uni-modal with a single maximum, clearly hinting at
$\overline{\pi} \approx \pi^*$.

\begin{figure}[htp]
	\includegraphics[width=0.49\textwidth]{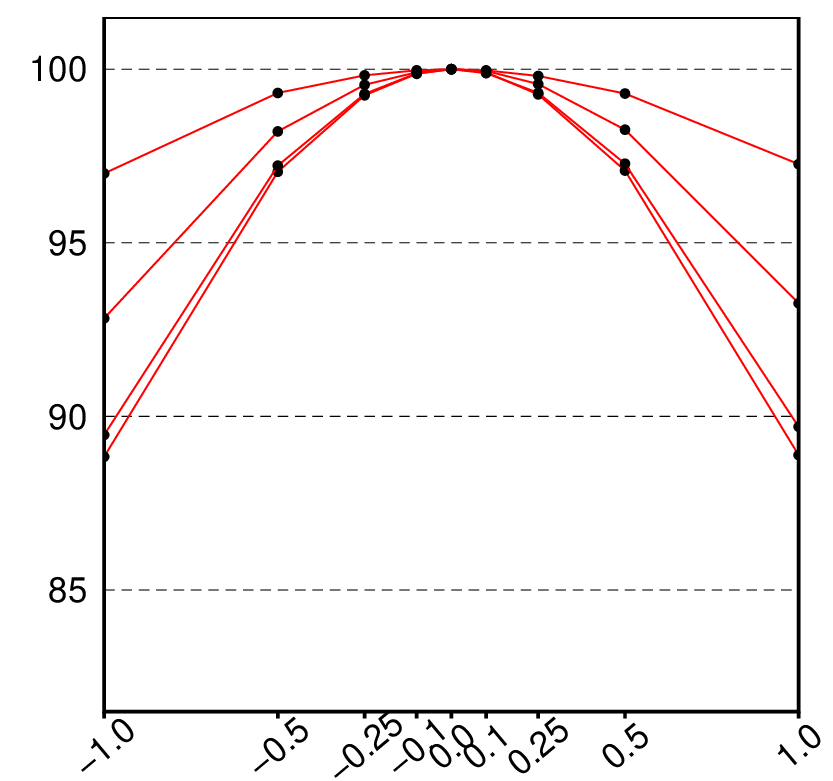}~\includegraphics[width=0.49\textwidth]{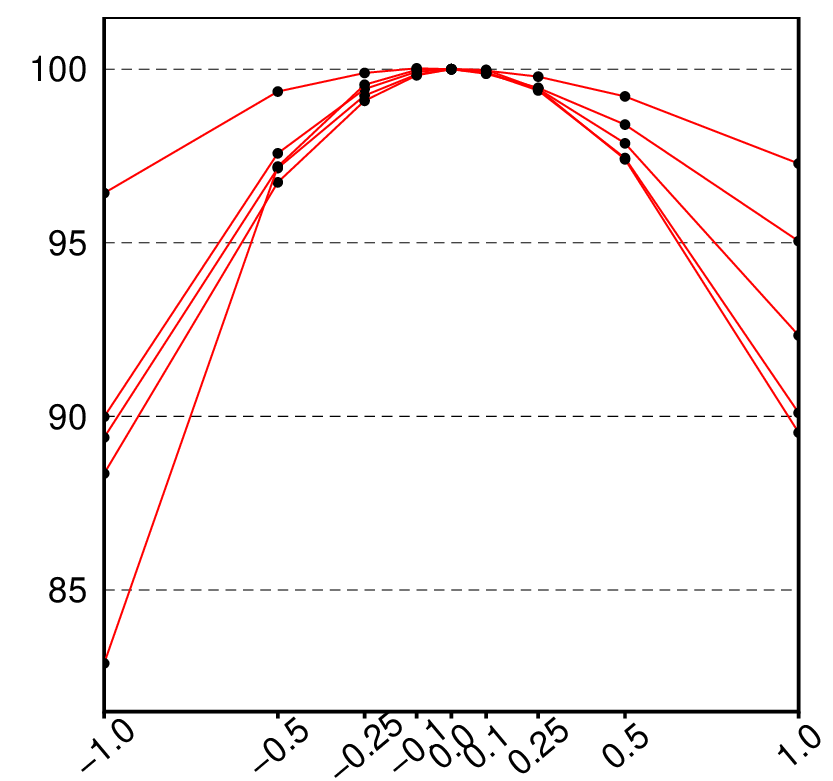}\\[0.5em]
	\includegraphics[width=0.49\textwidth]{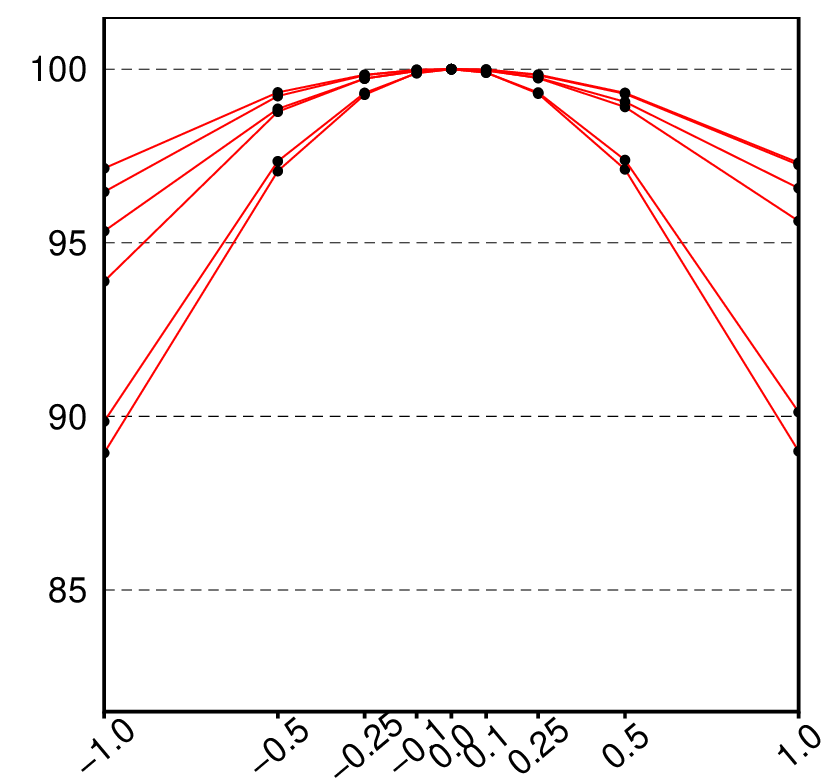}~\includegraphics[width=0.49\textwidth]{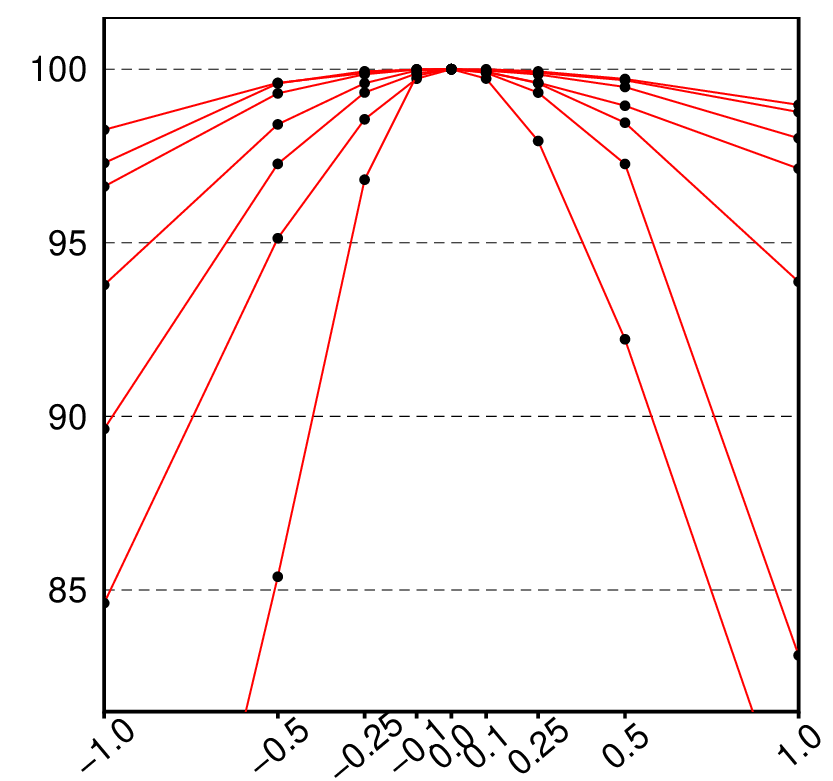}
\begin{center}
	\caption{ \label{fig:rho-pi}
		Curves $t \mapsto \rho(\gamma_{\overline{\pi}, i}(t)) / \rho(\overline{\pi})$,
		$i \in I = \{1, \dots, n\}$, for random problem instances in
		dimensions $n=4$ (top left), $n=5$ (top right),
		$n=6$ (bottom left), and $n=7$ (bottom right).
		The numerical optimum is located at $t = 0$, corresponding to
		the distribution~$\overline{\pi} \approx \pi^*$.
	}
\end{center}
\end{figure}

These experiments indicate that conjecture~\ref{conjecture:equilibrium}
may indeed hold true. In any case we argue that the empirical evidence
is sufficient to justify the design of a heuristic online adaptation
strategy for $\pi$ that is based on equalizing all $\rho_i$. Moreover,
the uni-modality of the performance curves indicates that the
identification of the global optimum may be possible with iterative
methods starting from any initial configuration.

The ACF algorithm performs a similar type of adaptation as described
above for obtaining an estimate of $\overline{\pi}$, with the decisive
difference that this adaptation is performed online and without
knowledge of the optimum (which would be necessary for the computation
of $\rho$ and $\rho_i$). It aims at maximizing $\rho$ by driving the
coordinate-wise progress deviations $\rho_i - \rho$ towards zero.

The intuition behind ACF's adaptation mechanism is as follows: first of
all, increasing the $i$-th coordinate's probability $\pi_i$ results in a
decrease of its progress rate. This is because the progress in
direction~$i$ is spread over more CD steps. In the extreme case of
performing two consecutive steps with the same coordinate the second
step does not make any further progress (provided that one-dimensional
sub-problems are solved optimally). The second insight is that there is
no need to compute $\log(f(w))$ (or more generally $\log(f(w) - f^*)$
where $f^*$ denotes the unknown optimal objective value) in order to
compare coordinate-wise progress. Instead it is sufficient to compare
the step-wise \emph{gains} $f(w^{(t-1)}) - f(w^{(t)})$ for different
coordinates, or equivalently, to compare them to the moving average
$\overline{r}$ in algorithm~\ref{algo:ACF}.
Adjusting $\pi_i$ so that all coordinate-wise gains become equal should
be about the same as equalizing coordinate-wise progress rates. This
intuition is made explicit in the following theorem. It ensures that
under a number of conditions the ACF algorithm indeed adapts the
expected coordinate distribution $\pi$ so as to maximize the progress
rate.

\begin{theorem} \label{theorem:acf}
Assume the preconditions
\begin{enumerate}
\item
	the Markov chain $z^{(t)}$ is in its stationary distribution,
\item
	the progress rate $\rho$ is infinitesimal, or in other words,
	the first order Taylor approximation
	\begin{align*}
		\log(f(w)) - \log(f(T_i(w))) \approx \frac{f(w) - f(T_i(w))}{f(w)}
	\end{align*}
	in $w$ becomes exact,
\item
	the estimate $\overline{r}$ of average progress in
	algorithm~\ref{algo:ACF} is exact, i.e.,
	$\overline{r} = \Expectation[f(w) - f(T(w))]$.
\item
	$\rho_i \big( \gamma_{\pi,i}(t) \big)$ is strictly monotonically
	decreasing for all $\pi \in \mathring{\Delta}$ and $i \in I$.
\end{enumerate}
Let $\pi^{(t)}$ denote the sequence of distributions generated by the
ACF algorithm with learning rate $0 < \eta \ll 1$ and bounds
$p_{\min} = 0$ and $p_{\max} = \infty$. Then $\Expectation[\pi^{(t)}]$
fulfills the equilibrium condition $\rho_i = \rho$ for all $i \in I$.
\end{theorem}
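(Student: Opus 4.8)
The plan is to reduce the statement to an analysis of the expected one--step change of the coordinate distribution, using that under precondition~1 the per--step statistics are time independent and that for $\eta \ll 1$ the preferences move slowly enough that the sequence $\Expectation[\pi^{(t)}]$ is governed, to leading order, by this expected drift (a standard stochastic--approximation / mean--field reduction). Writing the (unclipped, since $p_{\min}=0$, $p_{\max}=\infty$) multiplier for a step with coordinate $i$ as $m_i = \exp\!\big(c(\Delta f/\overline{r}-1)\big)$, a single CD step with chosen coordinate $I$ sends $p_I \mapsto m_I p_I$, leaves the other $p_j$ fixed, and updates $p_\text{sum} \mapsto p_\text{sum}\big(1+\pi_I(m_I-1)\big)$. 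Converting to $\pi_i=p_i/p_\text{sum}$ and abbreviating $h_i(\delta)=\frac{e^{c\delta}-1}{1+\pi_i(e^{c\delta}-1)}$ and $\delta_i=\Delta f/\overline{r}-1$, this gives $\Delta\pi_I=\pi_I(1-\pi_I)h_I(\delta_I)$ and $\Delta\pi_j=-\pi_j\pi_I h_I(\delta_I)$ for $j\neq I$. Taking expectations over $I\sim\pi$ and over the realized progress (at stationarity of $z^{(t)}$), and setting $b_i:=\Expectation[h_i(\delta_i)\mid I=i]$, a short computation collapses to the zero--sum form
\begin{align*}
	\Expectation[\Delta\pi_i] \;=\; \pi_i\Big(\pi_i b_i - \sum_{j=1}^n \pi_j^2 b_j\Big) ,
\end{align*}
so that $\sum_i \Expectation[\Delta\pi_i]=0$ automatically and the interior stationary points are exactly the distributions for which $\pi_i b_i$ is independent of~$i$.

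The second step identifies that fixed--point condition with $\rho_i=\rho$. By precondition~2 the per--step log--gain $\log f(w)-\log f(T_i w)$ equals the relative gain $(f(w)-f(T_i w))/f(w)$, and since one step changes $f$ only by this infinitesimal relative amount, $f(w^{(t-1)})$ is essentially constant over the averaging window; hence $\Delta f$ is, up to this common factor, the log--gain, and its conditional mean given chosen coordinate~$i$ is $\rho_i$ (the average progress of steps with coordinate~$i$, cf.~\eqref{eq:rho}). Precondition~3 gives $\overline{r}=\Expectation[\Delta f]$, i.e.\ $\overline{r}$ corresponds to $\rho$, so $\Expectation[\delta_i\mid I=i]=\rho_i/\rho-1$. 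The intended point is then that, because $h_i$ is strictly increasing with $h_i(0)=0$, $\sign(b_i)=\sign(\rho_i-\rho)$; and since $\rho=\sum_i\pi_i\rho_i$ is the $\pi$--average of the $\rho_i$, they cannot all exceed $\rho$ nor all lie below it, which rules out a nonzero common value of $\pi_i b_i$ at an interior fixed point. Therefore $\pi_i b_i\equiv 0$, forcing $b_i=0$ and hence $\rho_i=\rho$ for every $i$ --- the claimed equilibrium condition.

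Precondition~4 supplies the remainder: increasing $\pi_i$ along the curve $\gamma_{\pi,i}$ strictly decreases $\rho_i$, hence --- via the stochastic monotonicity of $\delta_i$ in $\pi_i$ --- decreases $b_i$, so the mean drift above is a monotone, Rprop--style map whose iteration converges to the unique balanced distribution, mirroring the construction of $\overline{\pi}$ in section~\ref{section:MC}. The same feedback keeps the flow in $\mathring{\Delta}$ despite $p_{\min}=0$: a coordinate with $\pi_i\to0$ has $\rho_i$ bounded away from $\rho$ from above, so $b_i>0$ and $\pi_i$ is pushed back up --- the rigorous version of the ``exclude trivial solutions'' caveat. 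The main obstacle I anticipate is precisely the sign claim $\sign(b_i)=\sign(\rho_i-\rho)$: $b_i=\Expectation[h_i(\delta_i)]$ is the expectation of a \emph{nonlinear} multiplier over the non--degenerate conditional law of $\delta_i$, whose spread stays $\Order(1)$ even as $\rho\to0$ (it comes from the previous coordinate and the position within $\nu_k^{\infty}$), so $\Expectation[h_i(\delta_i)]=0$ does not immediately yield $\Expectation[\delta_i]=0$. Closing this gap is where the argument must work hardest --- either through genuine quantitative control of that conditional law, or by replacing the exact sign statement with a monotone fixed--point argument resting directly on precondition~4; the drift computation and the stochastic--approximation justification of the $\eta\to0$ reduction are routine by comparison.
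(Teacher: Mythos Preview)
Your route is the same as the paper's in spirit but executed with considerably more care. The paper's proof is a short sketch: preconditions~2 and~3 give
\[
\Expectation\big[(f(w)-f(T_i w)) - \overline{r}\big] \;\propto\; \rho_i(\pi)-\rho(\pi),
\]
after which the paper simply asserts that ``the term inside the expectation on the left hand side drives the adaptation of~$\pi$'', invokes precondition~4, and declares that any stationary point of the resulting dynamics satisfies $\rho_i=\rho$. There is no explicit drift computation, no treatment of the $p_{\text{sum}}$-normalization, and no discussion of the nonlinearity of the exponential multiplier.

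The obstacle you isolate --- that $b_i=\Expectation[h_i(\delta_i)\mid I=i]$ need not share the sign of $\Expectation[\delta_i\mid I=i]=\rho_i/\rho-1$ because the conditional law of $\delta_i$ has $\Order(1)$ spread even as $\rho\to 0$ --- is therefore genuine, and the paper does not close it either: it passes silently from the \emph{mean} of $\Delta f-\overline{r}$ to the direction of the ACF update as though the update were linear in that quantity. Your explicit drift formula $\Expectation[\Delta\pi_i]=\pi_i\big(\pi_i b_i-\sum_j\pi_j^2 b_j\big)$ and your fixed-point analysis already go beyond what the paper supplies. If you wish to match the paper's level of rigor you may simply observe that under its informal reading of ``infinitesimal progress'' the multiplier is implicitly linearized, whence $b_i\propto\rho_i/\rho-1$ and the sign claim becomes trivial; that is effectively all the paper establishes.
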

\begin{proof}
From the prerequisites we obtain
\begin{align*}
	\rho_i &= \Expectation \big[ \log(f(w)) - \log(f(T_i(w))) \big] \\
		&= \Expectation \left[ \frac{f(w) - f(T_i(w))}{f(w)} \right] \\
		&= c \cdot \Expectation \big[ f(w) - f(T_i(w)) \big]
\end{align*}
with constant of proportionality $c = 1 / f(w)$, which is quasi
constant due to prerequisite~2, and
\begin{align*}
	\rho = \Expectation \big[ \rho_i \big] = c \cdot \Expectation \big[ f(w) - f(T(w)) \big] = c \cdot \overline{r}
	\enspace.
\end{align*}
We conclude that in expectation the deviation of additive progress
$f(w) - f(T_i(w))$ from its mean $\overline{r}$ is proportional to the
deviation of coordinate-wise progress $\rho_i(\pi)$ from its
mean~$\rho(\pi)$:
\begin{align*}
	\Expectation \Big[ \big( f(w) - f(T_i(w)) \big) - \overline{r} \Big] \quad \propto \quad \rho_i(\pi) - \rho(\pi)
	\enspace.
\end{align*}
In the ACF algorithm the term inside the expectation on the left hand
side drives the adaptation of $\pi$, which again impacts the right
hand side according to prerequisite~4. Thus, in expectation and for a
small enough learning rate $\eta$ the ACF rule drives the distribution
towards the equilibrium distribution~$\pi^*$. Any stationary point of
this process fulfills $\big( f(w) - f(T_i(w)) \big) = \overline{r}$ for
all $i \in I$, which is equivalent to $\rho_i = \rho$ for all $i \in I$.
\qed
\end{proof}

None of the prerequisites of theorem~\ref{theorem:acf} is a strong
assumption, indeed, all of them should be fulfilled in practice in good
approximation: CD algorithms usually perform extremely many cheap update
steps so the chain has enough time to approach its stationary
distribution, with the same arguments gains per single step are small,
and the monotonicity property~4 can easily be validated in the very same
experiments that test the conjecture.
It is also clear that some prerequisites are only fulfilled
approximately. Thus the ACF algorithm's stationary distribution is only
a proxy of the ideal distribution $\pi^*$. However, the deviation of the
resulting progress rates is usually small since the function $\rho$ is
rather flat in the vicinity of its optimizer~$\pi^*$
(see figure~\ref{fig:rho-pi}).

A reasonably careful interpretation of theorem~\ref{theorem:acf} is that
provided conjecture~\ref{conjecture:equilibrium} holds
the ACF algorithm adjusts the distribution $\pi$ in a close to optimal
way, on average. Note that the statement gives no guarantees on the
variance of $\pi^{(t)}$. For large dimensions~$n$ this variance can be
significant since only few samples are available for the estimation of
$\rho_i$. This is why in rare cases ACF may even have a deteriorating
effect on performance in practice as can be seen from some of the
experimental results in the next section.

\section{Empirical Evaluation}

In this section we investigate the performance of the ACF method. We
have run algorithm~\ref{algo:CD} in a number of variants reflecting the
state-of-the-art in the respective fields against the ACF-CD algorithm
for solving a number of instances of problem~\eqref{eq:trainingproblem},
namely the four problems discussed in section~\ref{section:cd-in-ml}.

We have implemented the ACF algorithm directly into the software
liblinear \cite{liblinear} for binary SVM and logistic regression
training of linear models. An efficient C implementation was created for
the LASSO problem with quadratic loss. We have implemented a CD solver
for linear multi-class training into the Shark machine learning
library~\cite{shark:2008}.

The stopping criteria for all algorithms were set in analogy to the
standard stopping criterion for linear and non-linear SVM training as
found in libsvm~\cite{fan:2005}, liblinear~\cite{liblinear}, and
Shark~\cite{shark:2008}. In the case of dual SVM training the algorithm
is stopped as soon as all Karush-Kuhn-Tucker (KKT) violations drop below
a threshold $\varepsilon$. The default value for non-linear SVM training
has been established as $\varepsilon = 0.001$, and less tight values
such as $\varepsilon = 0.01$ and even $\varepsilon = 0.1$ are commonly
applied for fast training of linear models. For SVMs (without bias term)
this measure simply computes the largest absolute component of the
gradient of the dual objective that is not blocked by an active
constraint. For (essentially) unconstrained problems such as LASSO and
logistic regression the stopping criterion checks whether all components
of the dual gradient have dropped below~$\varepsilon$.

The number of CD iterations is a straightforward performance indicator.
For sparse data this indicator is not always a reliable indicator of
computational effort, since not all coordinates correspond to roughly
equal numbers of non-zeros. In fact, in the LASSO experiments the cost
of the derivative computation that dominates the cost of a CD iteration
varies widely and cannot be assumed to be roughly constant.
Wall clock optimization time is a more relevant measure.
We measure optimization time only for exactly comparable implementations
and otherwise resort to the number of multiplications and additions
required to compute the derivatives, hereafter referred to as the
\textit{number of operations}. This quantity is a very good predictor of
the actual runtime, with the advantage of being independent of
implementation, CPU, memory bandwidth, and all kinds of inaccuracies
associated with runtime measurements.

Comparing training times in a fair way is non-trivial. This is because
the selection of a good value of the regularization parameter ($\lambda$
or $C$) requires several runs with different settings, often performed
in a cross validation manner. The computational cost of finding a good
value can easily exceed that of training the final model, and even a
good range is often hard to guess without prior knowledge.
The focus of the present study is on optimization. Therefore we don't
fix a specific model selection procedure and instead report training
times over reasonable ranges of values.

The various data sets used for evaluation are available from the libsvm
data website
\begin{center}
	\texttt{http://www.csie.ntu.edu.tw/{\textasciitilde}cjlin/libsvmtools/datasets/}
	\enspace.
\end{center}
The complete source code of our experiments is available at
\begin{center}
	\texttt{http://www.ini.rub.de/PEOPLE/glasmtbl/code/acf-cd/}
	\enspace.
\end{center}

\subsection{LASSO Regression}

To demonstrate the versatility of our approach we furthermore compared
ACF-CD to the LASSO solver proposed by \cite{friedman:2007}.
This is a straightforward deterministic CD algorithm, iterating over all
coordinates in order. We have used the data sets listed in
table~\ref{table:data-LASSO}, and for each of these problems we have
varied the parameter $\lambda$ in a range so that the resulting number
of non-zero features varies between very few (less than $10$) and many
(more than $10,000$), covering the complete range of interest.
This gives a rather complete picture of the relative performance of both
algorithms over a wide range of relevant optimization problems. The
results are summarized in table~\ref{tab:lasso-results}.

\begin{table}[t]
\begin{center}
	\setlength{\tabcolsep}{0.5em}
	\begin{tabular}{l|r|r}
		Problem & Instances $(\ell)$ & Features $(n = d)$ \\
		\hline
		news~20       & $19,996$ & $1,355,191$ \\
		rcv1          & $20,242$ & $47,236$ \\
		E2006-tfidf   & $16,087$ & $150,360$ \\
	\end{tabular}
	\vspace{1em}
	\caption{ \label{table:data-LASSO}
		Benchmark problems for the LASSO experiments.
	}
\end{center}
\end{table}

\begin{table}[h]
\begin{center}
	\setlength{\tabcolsep}{0.5em}
	\begin{tabular}{l|c|cc|cc|rr}
		\textbf{problem} & $\lambda$ & \multicolumn{2}{|c}{\textbf{uniform}} & \multicolumn{2}{|c}{\textbf{ACF}} & \multicolumn{2}{|c}{\textbf{speed-up}} \\
		  &  & iterations & operations & iterations & operations & iter.\ & oper.\ \\
		\hline
		rcv1    & $0.001$ & $7.06 \cdot 10^{8~}$ & $2.24 \cdot 10^{10}$ & $7.50 \cdot 10^{7~}$ & $4.63 \cdot 10^{9~}$ &  9.4 &  4.8 \\
		        &  $0.01$ & $9.21 \cdot 10^{7~}$ & $2.92 \cdot 10^{9~}$ & $1.86 \cdot 10^{7~}$ & $1.36 \cdot 10^{9~}$ &  5.0 &  2.1 \\
		        &   $0.1$ & $4.95 \cdot 10^{7~}$ & $1.57 \cdot 10^{9~}$ & $4.14 \cdot 10^{6~}$ & $4.43 \cdot 10^{8~}$ & 12.0 &  3.5 \\
		        &     $1$ & $2.36 \cdot 10^{7~}$ & $7.48 \cdot 10^{8~}$ & $1.53 \cdot 10^{6~}$ & $2.24 \cdot 10^{8~}$ & 15.4 &  3.3 \\
		        &    $10$ & $5.38 \cdot 10^{6~}$ & $1.71 \cdot 10^{8~}$ & $1.21 \cdot 10^{6~}$ & $1.79 \cdot 10^{8~}$ &  4.4 &  1.0 \\
		        &   $100$ & $4.25 \cdot 10^{5~}$ & $1.35 \cdot 10^{7~}$ & $2.36 \cdot 10^{5~}$ & $8.20 \cdot 10^{6~}$ &  1.8 &  1.6 \\
		\hline
		news~20 &   $0.1$ & $2.64 \cdot 10^{9~}$ & $1.78 \cdot 10^{10}$ & $3.88 \cdot 10^{7~}$ & $1.49 \cdot 10^{9~}$ & 68.0 & 11.9 \\
		        &     $1$ & $1.47 \cdot 10^{9~}$ & $9.89 \cdot 10^{9~}$ & $3.19 \cdot 10^{7~}$ & $7.50 \cdot 10^{8~}$ & 46.1 & 13.2 \\
		        &    $10$ & $3.78 \cdot 10^{8~}$ & $2.54 \cdot 10^{9~}$ & $2.30 \cdot 10^{7~}$ & $1.98 \cdot 10^{8~}$ & 16.4 & 12.8 \\
		        &   $100$ & $6.78 \cdot 10^{6~}$ & $4.55 \cdot 10^{7~}$ & $9.49 \cdot 10^{6~}$ & $6.42 \cdot 10^{7~}$ &  0.7 &  0.7 \\
		\hline
		E2006-tfidf & $0.001$ & $2.38 \cdot 10^{9~}$ & $3.16 \cdot 10^{11}$ & $4.08 \cdot 10^{7~}$ & $2.57 \cdot 10^{10}$ & 58.3 &  12.3 \\
		            &  $0.01$ & $3.40 \cdot 10^{8~}$ & $4.51 \cdot 10^{10}$ & $8.37 \cdot 10^{6~}$ & $4.02 \cdot 10^{8~}$ & 40.6 & 112.2 \\
		            &   $0.1$ & $2.59 \cdot 10^{7~}$ & $3.44 \cdot 10^{9~}$ & $5.70 \cdot 10^{6~}$ & $1.38 \cdot 10^{9~}$ &  4.5 &   2.5 \\
		            &     $1$ & $2.56 \cdot 10^{6~}$ & $3.40 \cdot 10^{8~}$ & $2.71 \cdot 10^{6~}$ & $3.75 \cdot 10^{8~}$ &  0.9 &   0.9 \\
	\end{tabular}
	\vspace{1em}
	\caption{ \label{tab:lasso-results}
		Performance of uniform CD (baseline) and the ACF-CD algorithm
		for LASSO training. The table lists numbers of iterations and
		operations, as well as the ``speed-up'' factor by which ACF-CD
		outperforms the uniform baseline (higher is better, values
		larger than one are speed-ups). The regularization parameter
		$\lambda$ so as to give the full range in between extremely
		sparse models with less than $10$ non-zeros and quite rich
		models with up to $10^4$ non-zero coefficients.
	}
\end{center}
\end{table}

The ACF-CD algorithm is never significantly slower than uniform CD and
in some cases faster by one to two orders of magnitude, while obtaining
solutions of equal quality (as indicated by the objective function
value). This marks a significant speed-up of ACF-CD over uniform CD.

\subsection{Linear SVM Training}

We compared ACF-CD in an extensive experimental study to the liblinear
SVM solver \cite{liblinear,hsieh:2008}. This is an extremely strong
and widely used baseline. The liblinear CD solver sweeps over random
permutations of coordinates in epochs. In addition it applies a
shrinking heuristic that removes bounded variables from the problem.
In other words the solver performs a simple type of online adaptation of
coordinate frequencies that is closely tied to the structure of the SVM
optimization problem, while ACF-CD applies its general-purpose
adaptation rule.

Our evaluation was based on six data sets listed in
table~\ref{table:data-linearsvm}. They range from medium sized to
extremely large.

\begin{table}[t]
\begin{center}
	\setlength{\tabcolsep}{0.5em}
	\begin{tabular}{l|r|r}
		Problem & Instances $(n = \ell)$ & Features $(d)$ \\
		\hline
		cover~type    & $581,012$ & $54$ \\
		kkd-a         & $8,407,752$ & $20,216,830$ \\
		kkd-b         & $19,264,097$ & $29,890,095$ \\
		news~20       & $19,996$ & $1,355,191$ \\
		rcv1          & $20,242$ & $47,236$ \\
		url           & $2,396,130$ & $3,231,961$ \\
	\end{tabular}
	\vspace{1em}
	\caption{ \label{table:data-linearsvm}
		Benchmark problems for linear SVM training.
	}
\end{center}
\end{table}

Both algorithms return accurate solutions to the SVM training problem.
The test errors coincide exactly. The algorithms don't differ in the
quality of the solution (dual objective values are extremely close;
often they coincide to 10 significant digits), but only in the time it
takes to compute this solution.
Training times of both algorithms are comparable since we have
implemented ACF-CD directly into the liblinear code.%
\footnote{An arbitrary outer loop iteration limit of $1000$ is
  hard-coded into liblinear version 1.9.2. We have removed this limit
  for the sake of a meaningful comparison.}

The results are reported compactly in figure~\ref{fig:svm-results}. The
figure includes three-fold cross validation performance which gives an
indication of which $C$ values are most relevant. The best value is
contained in the interior of the tested range in all cases. For
completeness, all timings and iteration numbers are listed tables
\ref{table:SVM-low} and~\ref{table:SVM-high}.

\begin{figure}[h!]
	\includegraphics[width=0.49\textwidth]{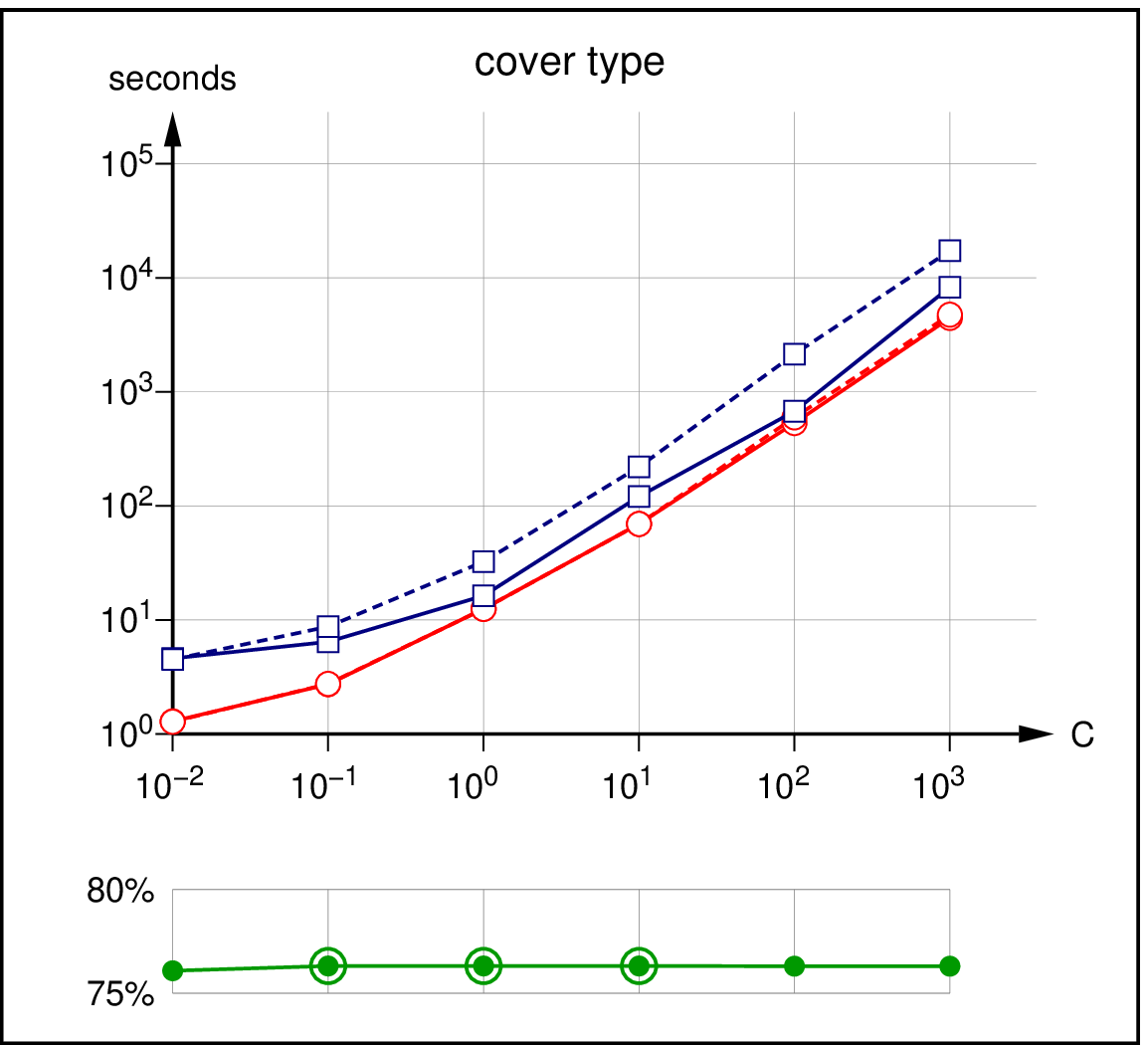}
	~~
	\includegraphics[width=0.49\textwidth]{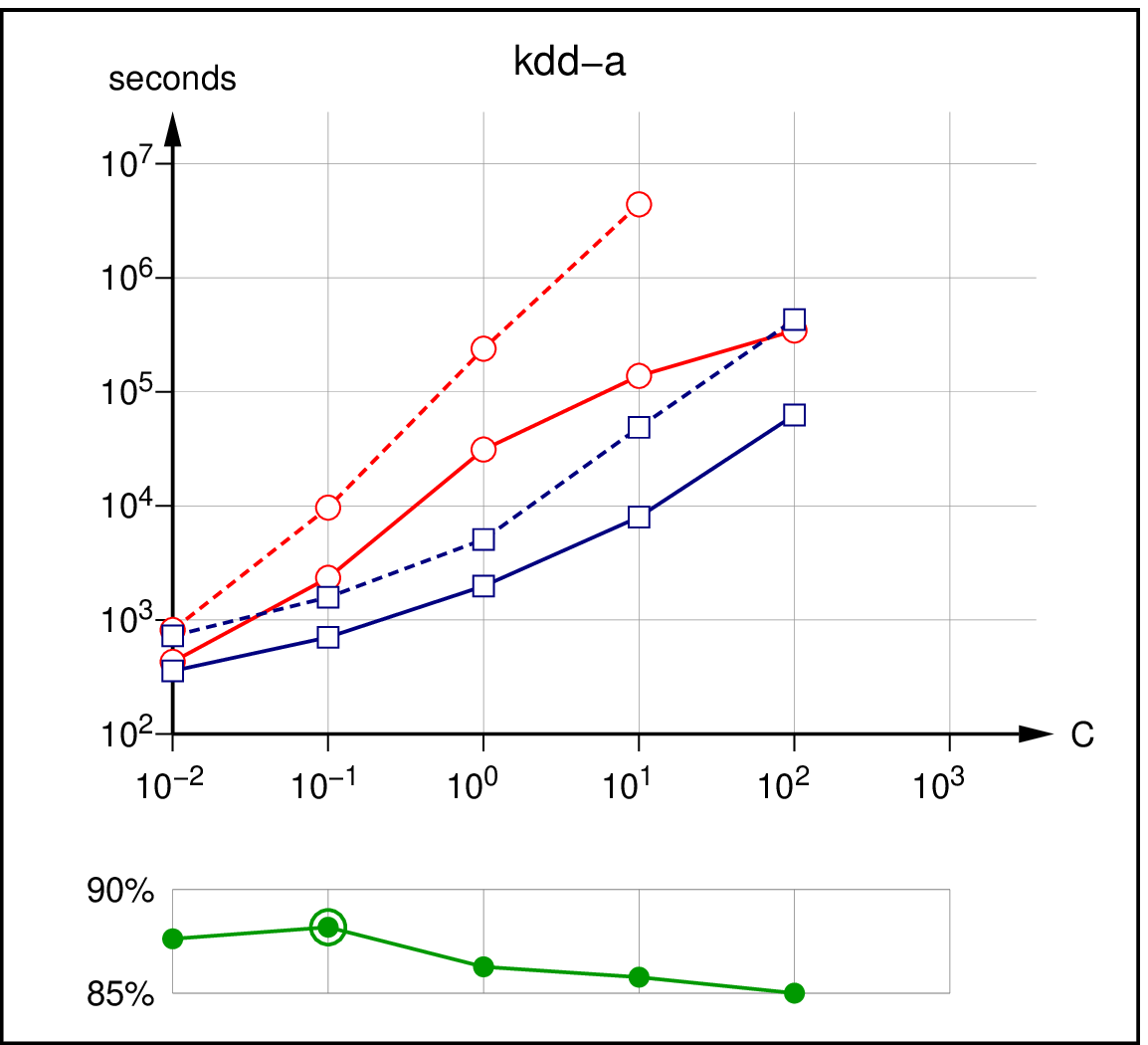}
	\\[1mm]
	\includegraphics[width=0.49\textwidth]{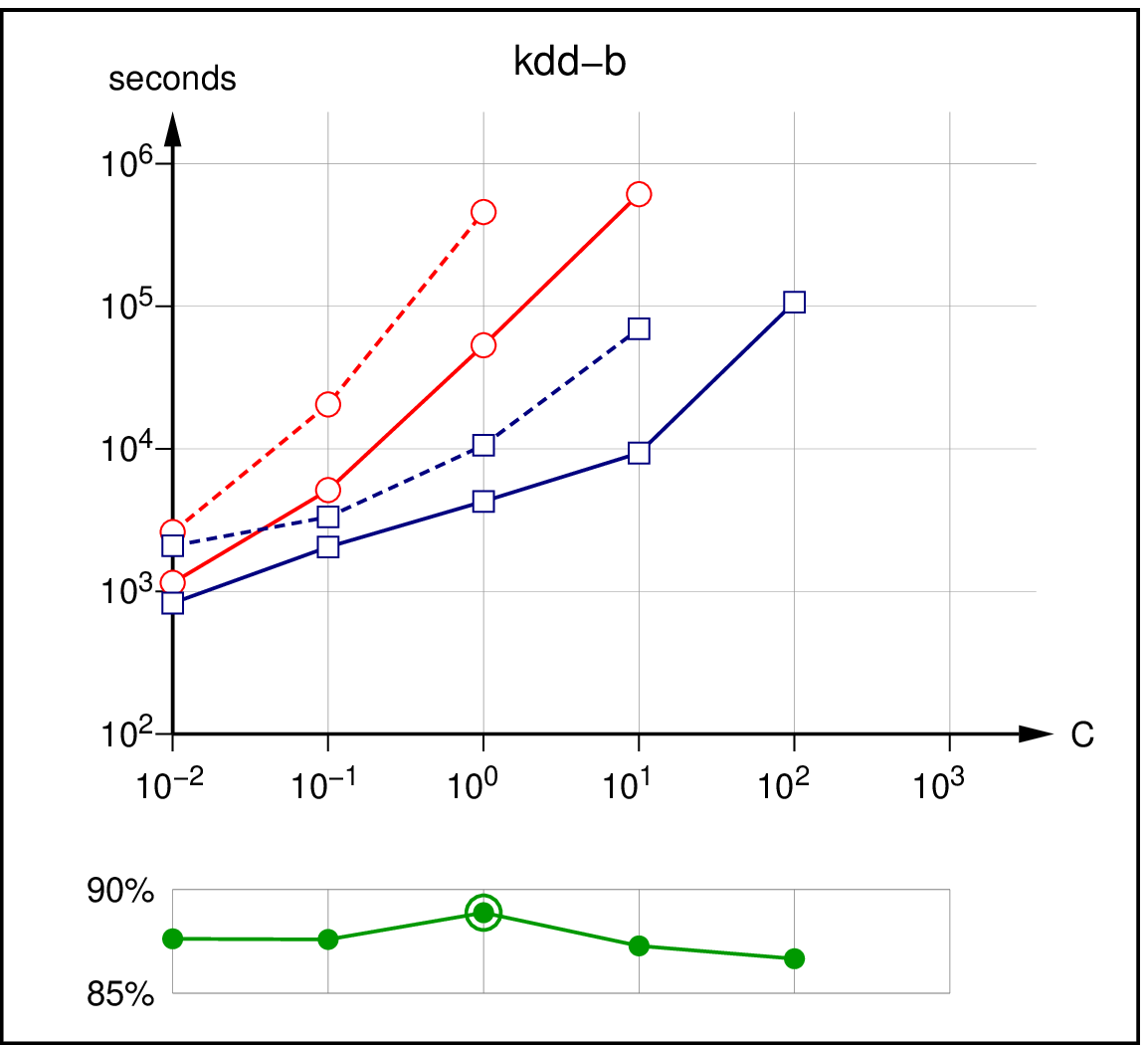}
	~~
	\includegraphics[width=0.49\textwidth]{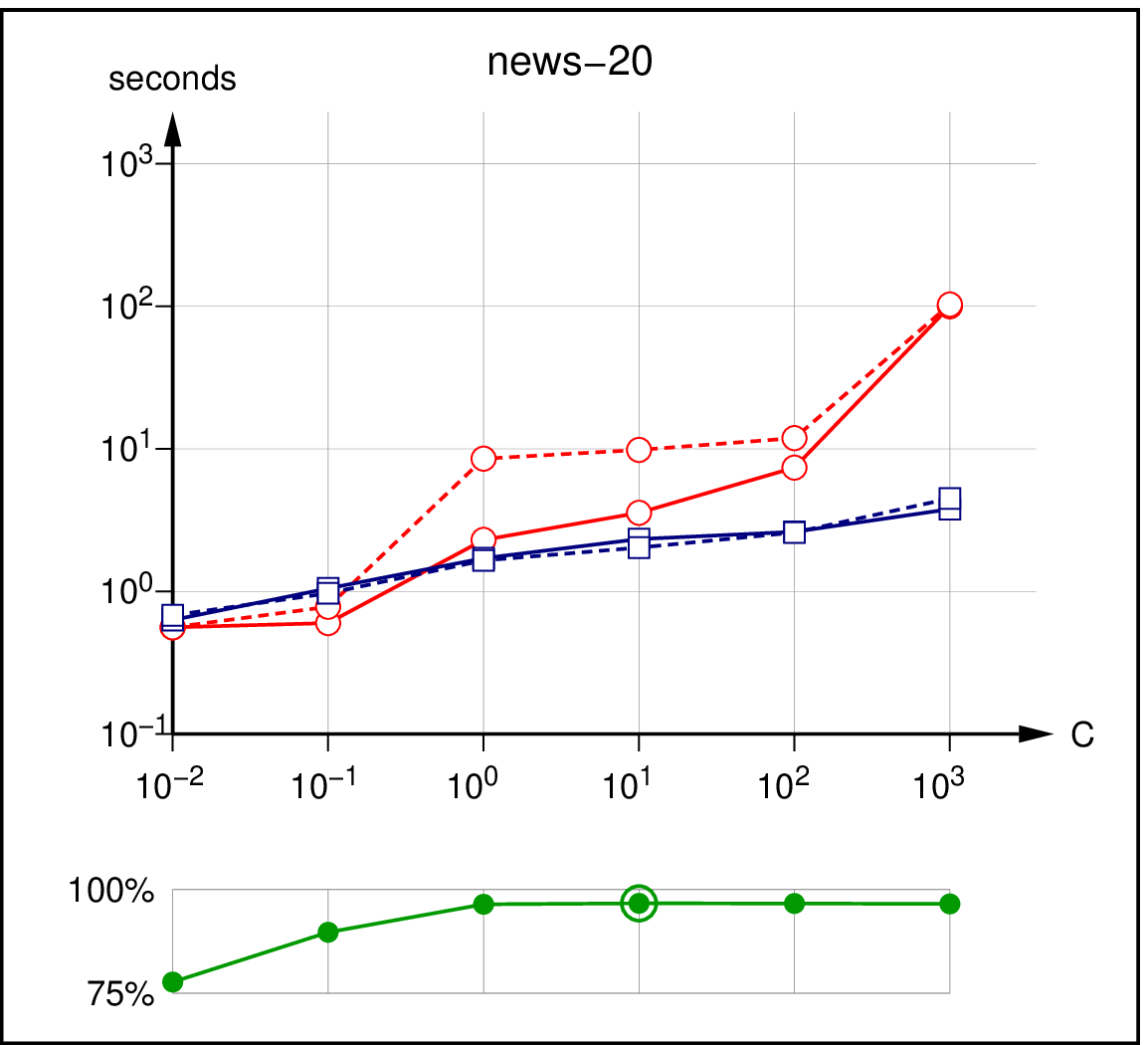}
	\\[1mm]
	\includegraphics[width=0.49\textwidth]{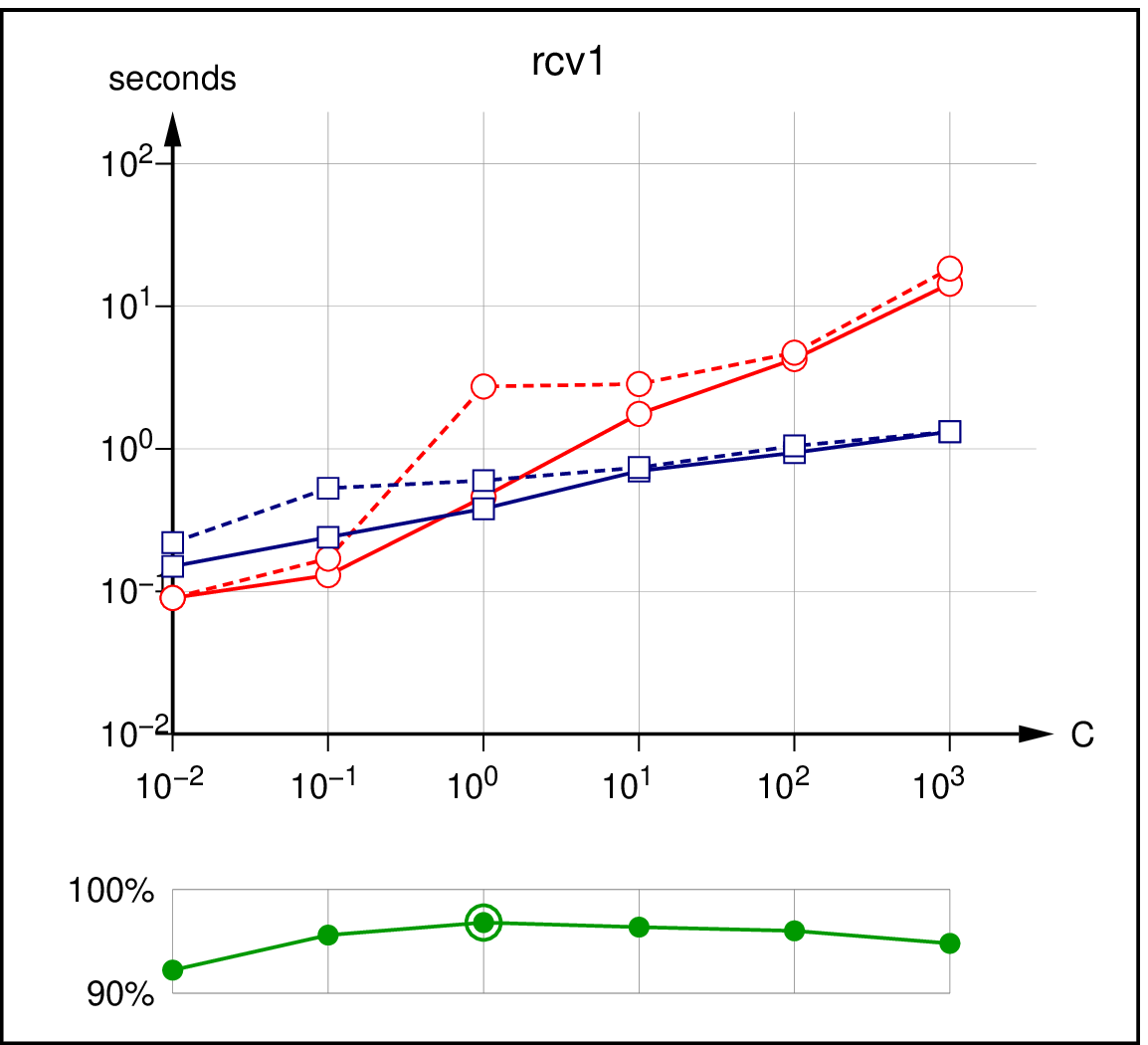}
	~~
	\includegraphics[width=0.49\textwidth]{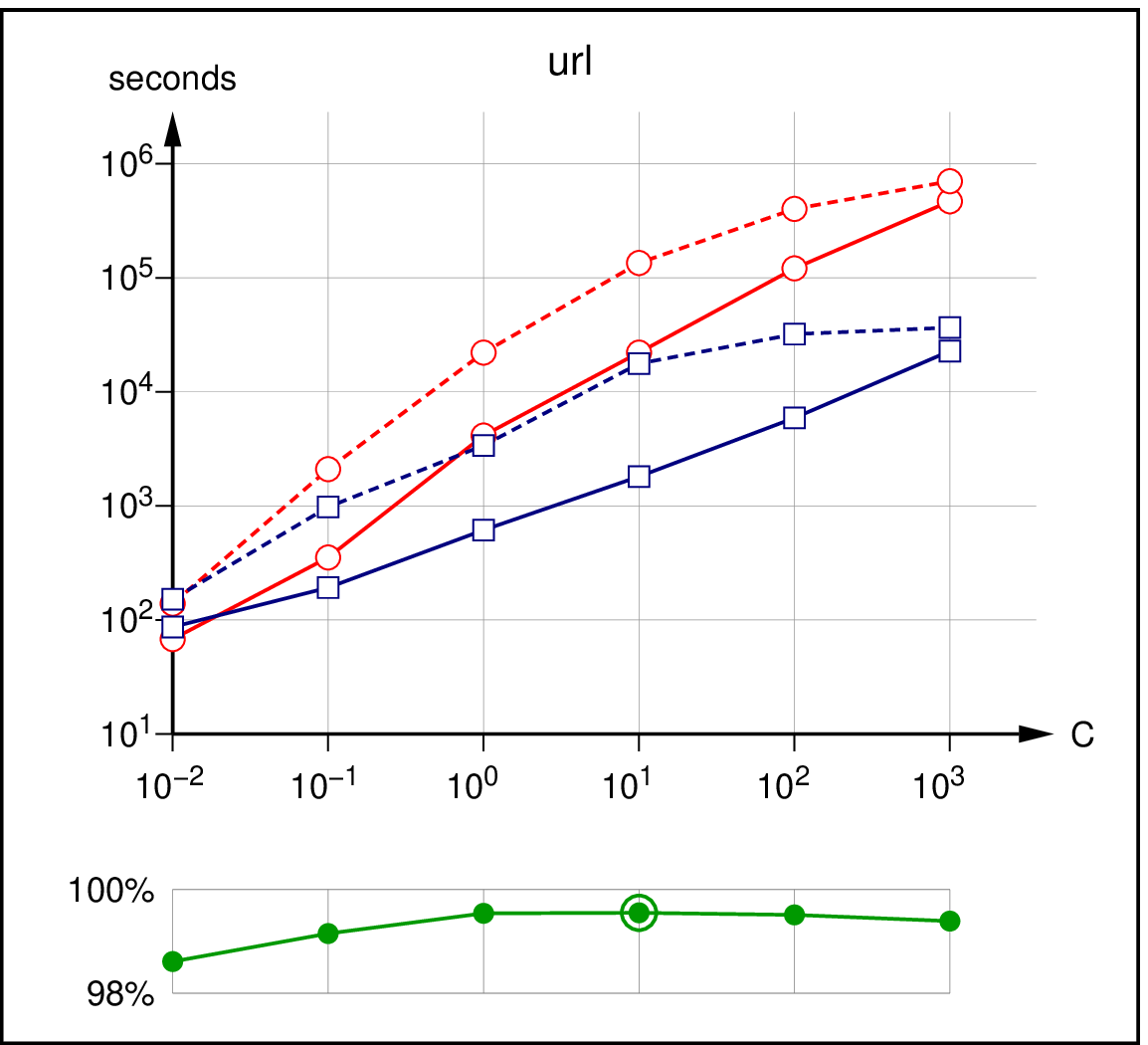}
\begin{center}
	\caption{ \label{fig:svm-results}
		Training times with the original liblinear algorithm
		(red circles) and with ACF-CD (blue squares) as a function of
		the regularization parameter~$C$. The target accuracy is
		$\varepsilon = 0.01$ for the solid curve and $\varepsilon = 0.001$
		for the dashed curves.
		For reference, three-fold cross validation performance (percent
		correct) is plotted below the curves in green, with best
		configurations circled. In all cases the best value(s) are
		contained in the interior of the chosen parameter range.
	}
\end{center}
\end{figure}

In most cases the ACF-CD algorithm is faster than liblinear. For large
values of $C$ it can outperform the baseline by more than an order of
magnitude (note the logarithmic scale in figure~\ref{fig:svm-results}).

The cover type problem is an exception. This problem is special for its
low feature dimensionality of only $54$ features, which means that the
$581,012$ dual variables are highly redundant. This implies that optimal
solution can be represented with many possible subsets of variables
$\alpha_i$ which makes adaptation of coordinate frequencies superfluous.
In this case the overhead of coordinate adaptation causes a considerable
slowdown.
It is actually well known that this problem can be solved more
efficiently in the primal, e.g., with liblinear's trust region method.

Overall (e.g., summing over all experiments) the ACF-CD method clearly
outperforms the liblinear algorith, which is a strong baseline.

\begin{table}[h]
\begin{center}
	\small
	\setlength{\tabcolsep}{0.25em}
	\begin{tabular}{l|c|rrrrrr}
		Data Set & Solver & $C=0.01$ & $ C=0.1$ & $C=1$ & $C=10$ & $C=100$ & $C=1000$ \\
		\hline
		\hline
		cover~type    & liblinear & $1.29$ & $2.73$ & $12.5$ & $69.5$ & $533$ & $4,450$ \\
		              &           & \TT{3.31 \cdot 10^{6~}} & \TT{7.41 \cdot 10^{6~}} & \TT{3.38 \cdot 10^{7~}} & \TT{1.80 \cdot 10^{8~}} & \TT{1.37 \cdot 10^{9~}} & \TT{1.14 \cdot 10^{10}} \\
		              & ACF       & $4.50$ & $6.43$ & $16.3$ & $121$ & $676$ & $8280$ \\
		              &           & \TT{8.92 \cdot 10^{6~}} & \TT{1.29 \cdot 10^{7~}} & \TT{3.31 \cdot 10^{7~}} & \TT{1.92 \cdot 10^{8~}} & \TT{1.49 \cdot 10^{9~}} & \TT{1.41 \cdot 10^{10}} \\
		\hline
		kkd-a         & liblinear & $429$ & $2,340$ & $31,200$ & $138,000$ & $345,000$ & --- \\
		              &           & \TT{3.07 \cdot 10^{8~}} & \TT{1.57 \cdot 10^{9~}} & \TT{1.88 \cdot 10^{10}} & \TT{8.77 \cdot 10^{10}} & \TT{2.35 \cdot 10^{11}} &        \\
		              & ACF       & $357$ &   $705$ & $1,980$ & $7,990$ & $62,700$ & --- \\
		              &           & \TT{2.93 \cdot 10^{8~}} & \TT{5.56 \cdot 10^{8~}} & \TT{1.49 \cdot 10^{9~}} & \TT{6.55 \cdot 10^{9~}} & \TT{5.21 \cdot 10^{10}} &        \\
		\hline
		kkd-b         & liblinear & $1,150$ & $5,140$ & $53,300$ & $612,000$ & --- & --- \\
		              &           & \TT{6.92 \cdot 10^{8~}} & \TT{2.86 \cdot 10^{9~}} & \TT{3.11 \cdot 10^{10}} & \TT{3.42 \cdot 10^{11}} &        &        \\
		              & ACF       &   $828$ & $2,050$ & $4,280$ &  $9,350$ & $107,000$ & --- \\
		              &           & \TT{6.50 \cdot 10^{8~}} & \TT{1.11 \cdot 10^{9~}} & \TT{2.91 \cdot 10^{9~}} & \TT{7.13 \cdot 10^{9~}} & \TT{8.04 \cdot 10^{10}} &        \\
		\hline
		news~20       & liblinear & $0.56$ & $0.60$ & $2.30$ & $3.56$ & $7.39$ & $100$ \\
		              &           & \TT{8.03 \cdot 10^{4~}} & \TT{1.22 \cdot 10^{5~}} & \TT{4.04 \cdot 10^{5~}} & \TT{6.38 \cdot 10^{5~}} & \TT{1.38 \cdot 10^{6~}} & \TT{2.47 \cdot 10^{7~}} \\
		              & ACF       & $0.63$ & $1.05$ & $1.71$ & $2.33$ & $2.62$ & $3.78$ \\
		              &           & \TT{1.20 \cdot 10^{5~}} & \TT{2.03 \cdot 10^{5~}} & \TT{3.37 \cdot 10^{5~}} & \TT{3.82 \cdot 10^{5~}} & \TT{4.82 \cdot 10^{5~}} & \TT{7.37 \cdot 10^{5~}} \\
		\hline
		rcv1          & liblinear & $0.09$ & $0.13$ & $0.46$ & $1.76$ & $4.27$ & $14.1$ \\
		              &           & \TT{9.36 \cdot 10^{4~}} & \TT{1.46 \cdot 10^{5~}} & \TT{4.77 \cdot 10^{5~}} & \TT{1.70 \cdot 10^{6~}} & \TT{4.19 \cdot 10^{6~}} & \TT{1.43 \cdot 10^{7~}} \\
		              & ACF       & $0.15$ & $0.24$ & $0.38$ & $0.70$ & $0.94$ & $1.32$ \\
		              &           & \TT{1.62 \cdot 10^{5~}} & \TT{2.78 \cdot 10^{5~}} & \TT{4.62 \cdot 10^{5~}} & \TT{7.98 \cdot 10^{5~}} & \TT{1.05 \cdot 10^{6~}} & \TT{1.51 \cdot 10^{6~}} \\
		\hline
		url           & liblinear & $67.9$ & $353$ & $4,140$ & $22,100$ & $121,000$ & $469,000$ \\
		              &           & \TT{4.05 \cdot 10^{7~}} & \TT{1.93 \cdot 10^{8~}} & \TT{2.22 \cdot 10^{9~}} & \TT{1.45 \cdot 10^{10}} & \TT{8.04 \cdot 10^{10}} & \TT{2.74 \cdot 10^{11}} \\
		              & ACF       & $86.7$ & $192$ & $614$ & $1,810$ & $5,910$ & $22,800$ \\
		              &           & \TT{6.24 \cdot 10^{7~}} & \TT{1.30 \cdot 10^{8~}} & \TT{4.24 \cdot 10^{8~}} & \TT{1.16 \cdot 10^{9~}} & \TT{4.34 \cdot 10^{9~}} & \TT{1.73 \cdot 10^{10}} \\
		\hline
	\end{tabular}
	\vspace{1em}
	\caption{ \label{table:SVM-low}
		Results of linear SVM training with low accuracy $\varepsilon = 0.01$.
		The table lists runtime in seconds and (small font below) the number
		of CD iterations.
		Runs marked with ``---'' did not finish after several weeks of training.
	}
\end{center}
\end{table}

\begin{table}[h]
\begin{center}
	\small
	\setlength{\tabcolsep}{0.25em}
	\begin{tabular}{l|c|rrrrrr}
		Problem & Solver & $C=0.01$ & $ C=0.1$ & $C=1$ & $C=10$ & $C=100$ & $C=1000$ \\
		\hline
		\hline
		cover~type    & liblinear & $1.28$ & $2.75$ & $12.5$ & $69.5$ & $597$ & $4,750$ \\
		              &           & \TT{3.31 \cdot 10^{6~}} & \TT{7.41 \cdot 10^{6~}} & \TT{3.38 \cdot 10^{7~}} & \TT{1.80 \cdot 10^{8~}} & \TT{1.78 \cdot 10^{9~}} & \TT{1.44 \cdot 10^{10}} \\
		              & ACF       & $4.50$ & $8.74$ & $32.4$ & $220$ & $2.140$ & $17,300$ \\
		              &           & \TT{8.92 \cdot 10^{6~}} & \TT{1.80 \cdot 10^{7~}} & \TT{6.45 \cdot 10^{7~}} & \TT{4.62 \cdot 10^{8~}} & \TT{4.32 \cdot 10^{9~}} & \TT{3.71 \cdot 10^{10}} \\
		\hline
		kkd-a         & liblinear &      $817$ &    $9,660$ &  $239,000$ & $4,410,000$ & --- & --- \\
		              &           & \TT{1.11 \cdot 10^{9~}} & \TT{9.16 \cdot 10^{9~}} & \TT{1.59 \cdot 10^{11}} & \TT{1.66 \cdot 10^{12}} &                         &                         \\
		              & ACF       &      $725$ &    $1,580$ &    $5,080$ &   $48,800$ &  $430,000$ & --- \\
		              &           & \TT{4.99 \cdot 10^{8~}} & \TT{8.90 \cdot 10^{8~}} & \TT{4.00 \cdot 10^{9~}} & \TT{3.32 \cdot 10^{10}} & \TT{2.67 \cdot 10^{11}} &                         \\
		\hline
		kdd-b         & liblinear &    $2,610$ &   $20,500$ &  $459,000$ & --- & --- & --- \\
		              &           & \TT{1.94 \cdot 10^{9~}} & \TT{1.17 \cdot 10^{10}} & \TT{2.73 \cdot 10^{11}} &                         &                         &                         \\
		              & ACF       &    $2,090$ &    $3,330$ &   $10,600$ &   $69,500$ &  --- & --- \\
		              &           & \TT{1.05 \cdot 10^{9~}} & \TT{1.77 \cdot 10^{9~}} & \TT{6.99 \cdot 10^{9~}} & \TT{4.28 \cdot 10^{10}} &                         &                         \\
		\hline
		news~20       & liblinear &     $0.56$ &     $0.78$ &     $8.54$ &     $9.84$ &     $11.9$ &      $103$ \\
		              &           & \TT{8.03 \cdot 10^{4~}} & \TT{1.54 \cdot 10^{5~}} & \TT{1.55 \cdot 10^{6~}} & \TT{1.87 \cdot 10^{6~}} & \TT{2.90 \cdot 10^{6~}} & \TT{2.50 \cdot 10^{7~}} \\
		              & ACF       &     $0.68$ &     $0.97$ &     $1.65$ &     $2.03$ &     $2.59$ &     $4.49$ \\
		              &           & \TT{1.20 \cdot 10^{5~}} & \TT{2.03 \cdot 10^{5~}} & \TT{3.37 \cdot 10^{5~}} & \TT{3.82 \cdot 10^{5~}} & \TT{4.82 \cdot 10^{5~}} & \TT{8.80 \cdot 10^{5~}} \\
		\hline
		rcv1          & liblinear &     $0.09$ &     $0.17$ &     $2.74$ &     $2.85$ &     $4.73$ &     $18.4$ \\
		              &           & \TT{9.40 \cdot 10^{4~}} & \TT{1.93 \cdot 10^{5~}} & \TT{3.36 \cdot 10^{6~}} & \TT{3.36 \cdot 10^{6~}} & \TT{5.63 \cdot 10^{6~}} & \TT{2.14 \cdot 10^{7~}} \\
		              & ACF       &     $0.22$ &     $0.53$ &     $0.60$ &     $0.74$ &     $1.05$ &     $1.32$ \\
		              &           & \TT{2.64 \cdot 10^{5~}} & \TT{6.09 \cdot 10^{5~}} & \TT{7.33 \cdot 10^{5~}} & \TT{9.14 \cdot 10^{5~}} & \TT{1.30 \cdot 10^{6~}} & \TT{1.70 \cdot 10^{6~}} \\
		\hline
		url           & liblinear &      $139$ &    $2,100$ &   $22,100$ &  $135,000$ &  $402,000$ &  $703,000$ \\
		              &           & \TT{8.27 \cdot 10^{7~}} & \TT{1.18 \cdot 10^{9~}} & \TT{1.46 \cdot 10^{10}} & \TT{7.61 \cdot 10^{10}} & \TT{2.35 \cdot 10^{11}} & \TT{3.78 \cdot 10^{11}} \\
		              & ACF       &      $152$ &     $978$ &    $3,390$ &   $17,700$ &   $32,100$ &   $36,600$ \\
		              &           & \TT{9.66 \cdot 10^{7~}} & \TT{5.92 \cdot 10^{8~}} & \TT{2.24 \cdot 10^{9~}} & \TT{9.76 \cdot 10^{9~}} & \TT{2.34 \cdot 10^{10}} & \TT{2.25 \cdot 10^{10}} \\
		\hline
	\end{tabular}
	\vspace{1em}
	\caption{ \label{table:SVM-high}
		Results of linear SVM training with high accuracy $\varepsilon = 0.001$.
		The table lists runtime in seconds and (small font below) the number
		of CD iterations.
		Runs marked with ``---'' did not finish after several weeks of training.
	}
\end{center}
\end{table}

\subsection{Multi-class SVM Training with Subspace Descent}

We evaluate a learning problem that naturally corresponds to a subspace
descent optimization problem, namely multi-class SVM training. The WW
multi-class SVM extension was implemented into the Shark~\cite{shark:2008}
machine learning library, version 3.0 (beta). The $K$-dimensional
sub-problems were solved with up to $10 \cdot K$ iterations of an inner
CD solver picking the largest derivative component for descent (here $K$
denotes the number of classes).

The data sets for evaluation are listed in table~\ref{table:data-mc}.
A separate test set was used to estimate a reasonable range for the
regularization parameter~$C$. The parameter was varied on a grid of the
form $C = 10^k$, and result are reported for a grid of size~$5$ around
the best value.

As discussed above, the liblinear algorithm applies a shrinking technique
to reduce the problem size during the optimization run. This technique
does not carry over in a one-to-one fashion to the multi-class problem.
For comparison we have implemented a similar shrinking heuristic into
the multi-class SVM solver. However, for this problem the heuristic did
not perform significantly better than the uniform baseline and sometimes
lead to considerably longer optimization times due to wrong shrinking
decisions. This is different from the binary SVM case where shrinking
works well in most cases. Therefore we have dropped shrinking and
instead compare ACF-CD against the better performing uniform baseline.

\begin{table}[h]
\begin{center}
	\setlength{\tabcolsep}{0.5em}
	\begin{tabular}{l|c|c|c}
		Problem & Instances $(n = \ell)$ & Features $(d)$ & Classes $(K)$ \\
		\hline
		iris     &    105 &      4 &  3 \\
		soybean  &    214 &     35 & 19 \\
		news-20  & 15,935 & 62,061 & 20 \\
		rcv1     & 15,564 & 47,236 & 53
	\end{tabular}
	\vspace{1em}
	\caption{ \label{table:data-mc}
		Benchmark problems for multi-class SVM
		(subspace descent) experiments.
	}
\end{center}
\end{table}

The experimental results are presented in table~\ref{table:results-WW}.
The ACF algorithm clearly outperforms the uniform coordinate selection
baseline. It is noteworthy that ACF does not only perform better but
also scales much more gracefully to hard optimization problems,
corresponding to large values of~$C$.

\begin{table}[h]
\begin{center}
	\setlength{\tabcolsep}{0.5em}
	\begin{tabular}{l|l|r|r|c|r|c|r|r}
		                 &     & test     & \multicolumn{2}{|c}{\textbf{uniform}} & \multicolumn{2}{|c|}{\textbf{ACF}} & \multicolumn{2}{|c}{\textbf{speed-up}} \\
		\textbf{problem} & $C$ & accuracy & iterations & seconds                  & iterations & seconds               & iter.\ & time                           \\
		\hline
		     & $10^{-2}$ &  60.0\% &     4,095 & 0.003 &   2,625 & 0.002 &  1.6 &  1.5 \\
		     & $10^{-1}$ &  60.0\% &    42,735 & 0.023 &  11,130 & 0.007 &  3.8 &  3.3 \\
		iris & $10^0$    & 100.0\% &   238,140 & 0.116 &  27,300 & 0.009 &  8.7 & 12.8 \\
		     & $10^1$    &  95.6\% & 5,007,870 & 2.16  & 410,445 & 0.279 & 12.2 &  7.7 \\
		     & $10^2$    &  95.6\% & 2,095,065 & 0.959 & 267,855 & 0.194 &  7.8 &  4.9 \\
		\hline
		        & $10^{-2}$ & 69.9\% &    20,972 & 0.0434 &    12,412 & 0.027 & 1.7 & 1.6 \\
		        & $10^{-1}$ & 88.2\% &    78,752 & 0.150  &    42,800 & 0.101 & 1.8 & 1.5 \\
		soybean & $10^0$    & 91.4\% &   377,282 & 0.664  &    93,732 & 0.218 & 4.0 & 3.0 \\
		        & $10^1$    & 86.0\% &   607,974 & 1.04   &   113,206 & 0.271 & 5.3 & 3.8 \\
		        & $10^2$    & 81.7\% & 7,038,032 & 11.8   & 1,346,916 & 2.79  & 5.2 & 4.2 \\
		\hline
		        & $10^{-4}$ & 76.7\% &    270,895 & 2.62 &   334,635 & 3.22 &  0.8 &  0.8 \\
		        & $10^{-3}$ & 81.9\% &  2,230,900 & 23.7 &   318,700 & 3.23 &  7.0 &  7.3 \\
		news~20 & $10^{-2}$ & 83.4\% &  1,290,735 & 14.4 &   462,115 & 5.04 &  2.8 &  2.9 \\
		        & $10^{-1}$ & 81.5\% &  6,023,430 & 63.3 &   780,815 & 8.77 &  7.7 &  7.2 \\
		        & $10^0$    & 79.2\% & 60,234,300 &  632 & 1,481,955 & 18.9 & 40.6 & 33.4 \\
		\hline
		     & $10^{-2}$ & 81.6\% &    513,612 & 15.5 &   295,716 & 9.13 &  1.7 &  1.7 \\
		     & $10^{-1}$ & 87.8\% &  2,241,216 & 76.4 &   513,612 & 18.2 &  4.4 &  4.1 \\
		rcv1 & $10^0$    & 88.8\% &  4,264,536 &  153 &   606,996 & 23.1 &  7.0 &  6.6 \\
		     & $10^1$    & 88.2\% & 12,746,916 &  468 &   793,764 & 32.6 & 16.0 & 14.3 \\
		     & $10^2$    & 87.8\% & 10,287,804 &  381 &   996,096 & 40.0 & 10.3 &  9.5 \\
	\end{tabular}
	\vspace{1em}
	\caption{ \label{table:results-WW}
		Number of iterations and training times in seconds for
		multi-class SVM training with subspace descent, as well as
		corresponding speed-up factors (higher is better). Test errors
		indicate that the parameter $C$ is varied within a reasonable
		range.
	}
\end{center}
\end{table}

\subsection{Logistic Regression}

We have implemented ACF-CD into the liblinear logistic regression solver
~\cite{liblinear,yu:2011}, analog to the linear SVM solver. There are
two major differences to the linear SVM case. First, the dual logistic
regression solution is not sparse and thus shrinking is not applicable.
Hence, liblinear applies uniform coordinate selection. Second, the
one-dimensional sub-problems cannot be solved analytically. Instead a
series of Newton steps is applied.

\begin{table}[h]
\begin{center}
	\setlength{\tabcolsep}{0.5em}
	\begin{tabular}{l|l|r|r|c|r|c|r|r}
		                 &     & 3-fold & \multicolumn{2}{|c}{\textbf{liblinear}} & \multicolumn{2}{|c|}{\textbf{ACF}} & \multicolumn{2}{|c}{\textbf{speed-up}} \\
		\textbf{problem} & $C$ & CV     & iterations & seconds & iterations & seconds & iter.\ & time \\
		\hline
		        & $10^2$ & 96.3\% &     1,459,708 &   4.22 &     660,500 &  1.84 &   2.2 &   2.3 \\
		        & $10^3$ & 96.4\% &    14,317,136 &   41.7 &     560,268 &  1.77 &  25.6 &  23.6 \\
		news~20 & $10^4$ & 96.5\% &   136,972,600 &    406 &   2,019,200 &  5.54 &  67.8 &  73.3 \\
		        & $10^5$ & 96.4\% & 1,292,781,392 &  3,720 &  10,355,564 &  27.7 & 124.8 & 134.3 \\
		        & $10^6$ & 96.5\% & 3,929,561,008 & 26,834 & 158,291,120 &   382 &  24.8 &  70.3 \\
		\hline
		     & $10^0$ & 96.1\% &     121,452 & 0.081 &    545,462 & 0.252 &  0.2 &  0.3 \\
		     & $10^1$ & 96.7\% &     263,146 & 0.153 &    345,099 & 0.188 &  0.8 &  0.8 \\
		rcv1 & $10^2$ & 96.7\% &   1,497,908 & 0.819 &    788,883 & 0.341 &  1.9 &  2.4 \\
		     & $10^3$ & 96.5\% &  14,007,464 &  7.46 &  2,389,345 &  1.03 &  5.9 &  7.2 \\
		     & $10^4$ & 96.3\% & 136,005,998 &  72.9 & 11,274,174 &  4.85 & 12.1 & 15.0 \\
		\hline
		    & $10^0$ & 99.17\% &   251,593,650 &    565 &    217,914,274 &     445 & 1.2 &  1.3 \\
		    & $10^1$ & 99.42\% & 2,317,057,710 &  4,265 &    723,665,786 &   1,478 & 3.2 &  2.9 \\
		url & $10^2$ & 99.44\% &   665,404,720 & 51,734 &  1,940,658,131 &   3,992 & 0.3 & 13.0 \\
		    & $10^3$ & 99.42\% & ---           &    --- & 16,933,597,906 &  44,115 & --- & --- \\
		    & $10^4$ & ---     & ---           &    --- & 56,218,463,548 & 139,041 & --- & --- \\
	\end{tabular}
	\vspace{1em}
	\caption{ \label{table:results-logreg}
		Number of iterations and training times in seconds for logistic
		regression, as well as corresponding speed-up factors (higher is
		better). Three-fold cross-validation performance indicates that
		the parameter $C$ is varied within a reasonable range.
		The \emph{url} runs for $C \geq 1,000$ with liblinear are marked
		with ``---''. They had to be stopped without finishing after five
		days of training.
	}
\end{center}
\end{table}

The data sets news~20, rcv1, and url from table~\ref{table:data-linearsvm}
were used for comparison. We have tuned the regularization parameter~$C$
on the grid $10^k$ based on three-fold cross-validation~(CV).
In table~\ref{table:results-logreg} we report results for a problem
specific range of five settings centered on the best three-fold CV
performance.

The results exhibit the usual pattern. In some highly regularized cases
ACF-CD is a bit slower, but in most cases it improves performance.
Saving are most significant where they are most relevant, namely for
parameter configurations that result in long training times. On the
logistic regression problem ACF-CD is up to two orders of magnitude
faster than the liblinear solver.

\subsection{Discussion}

Our results show that the ACF algorithm is superior to uniform CD. Of
course the algorithm can unfold its full potential only if it performs
sufficiently many sweeps over the coordinates. Highly regularized
machine learning problems tend to be simple in the sense that the
stopping criterion can be met already after very few sweeps. In this
case uniform coordinate selection is usually a good strategy, and the
computational overhead of adaptation can be saved. This is why ACF does
not beat the uniform baseline in all cases, in particular for small
values of the regularization parameter~$C$. However, these optimization
runs are anyway extremely fast and do not pose a computational challenge.
As soon as the problem becomes more involved ACF starts to pay off,
often saving $90\%$ of the training time and sometimes even more. This
is the dominating effect, e.g., when performing grid search or a
parameter study. The ACF algorithm does not only outperform the uniform
selection baseline but also the standard SVM training algorithm with its
shrinking technique. The only exception is the cover type data set.
Shrinking is a strong competitor since it is a domain-specific technique
designed explicitly to take a-priori knowledge about the dual SVM
solution into account. In contrast, ACF is a generic CD speedup
technique applicable to all problem types. We argue that outperforming a
problem specific technique such as shrinking with a general purpose
method such as ACF is a strong result.

\section{Conclusion}

We have introduced the Adaptive Coordinate Frequencies (ACF) algorithm.
It adapts the relative frequencies of coordinates in coordinate descent
(CD) optimization \emph{online} to the problem at hand. The aim of the
adaptation is to maximize convergence speed and thus to minimize the
time complexity of various machine training problems.

This technique allows to efficiently solve CD problems that greatly
profit from non-uniform coordinate selection probabilities. The need
for non-unifor\-mity is obvious at least in a machine learning context
where coordinates correspond to data points or features, some of which
are known to be more important than others, only it is hard to say
beforehand which ones are how important. Our method allows to start the
CD algorithm with an uninformed guess---the uniform distribution---or a
more informed choice if available, and to adapt the coordinate selection
distribution online for optimal progress. This is particularly helpful
for problems with \emph{changing} importance of coordinates, e.g., when
constraints become active or inactive.

We have presented a first analysis of the ACF method based on a Markov
chain perspective. We conjecture that the coordinate selection
distribution that maximizes the convergence rate is characterized by
equal progress in each coordinate. This property, granted that it holds
true in sufficient generality, provides an explanation of why the ACF
algorithm works so well. We show that in expectation and under certain
simplifying assumptions the ACF algorithm drives the coordinate
selection distribution towards this equilibrium. Extending our
understanding of this process and the underlying Markov chains resulting
from coordinate descent is a primary research goal for future
investigations.

It turns out that many successful applications of CD algorithms in
machine learning rely on uniform coordinate selection. The only notable
exception is linear SVM training where a shrinking heuristic can set a
coordinate probability to exact zero. We compared ACF to state-of-the-art
machine training implementations for four different problems. Overall
the new algorithm shows impressive performance. It systematically
outperforms the established algorithm, sometimes by an order of
magnitude and more, and falls behind only in rare special cases. We
therefore recommend online adaptation of coordinate frequencies as a
general tool for coordinate descent optimization, in particular in the
domain of machine learning.

\bibliographystyle{plain}

\end{document}